\documentclass[pdflatex,sn-apa]{sn-jnl}

\usepackage{graphicx}%
\usepackage{subcaption}
\usepackage{multirow}%
\usepackage{amsmath,amssymb,amsfonts}%
\usepackage{amsthm}%
\usepackage{mathrsfs}%
\usepackage[title]{appendix}%
\usepackage{xcolor}%
\usepackage{textcomp}%
\usepackage{manyfoot}%
\usepackage{booktabs}%
\usepackage{algorithm}%
\usepackage{algorithmicx}%
\usepackage{algpseudocode}%
\usepackage{listings}%
\usepackage{rotating}%

\newcommand{\R}{\mathbb{R}}
\newcommand{\Pa}{\mathrm{Pa}}
\newcommand{\Ch}{\mathrm{Ch}}
\newcommand{\An}{\mathrm{An}}
\newcommand{\De}{\mathrm{De}}
\newcommand{\NonDesc}{\mathrm{NonDesc}}
\newcommand{\Var}{\mathrm{Var}}
\newcommand{\Cov}{\mathrm{Cov}}
\newcommand{\E}{\mathbb{E}}

\theoremstyle{thmstyleone}%
\newtheorem{theorem}{Theorem}
\newtheorem{proposition}[theorem]{Proposition}%
\newtheorem{lemma}[theorem]{Lemma}%
\newtheorem{assumption}[theorem]{Assumption}%
\newtheorem{corollary}[theorem]{Corollary}%

\theoremstyle{thmstyletwo}%
\newtheorem{example}{Example}%
\newtheorem{remark}{Remark}%

\theoremstyle{thmstylethree}%
\newtheorem{definition}{Definition}%

\newlength{\expfigheight}
\begin{document}

\title{AdaPS-LiNGAM: Adaptive Predecessor Selection for Linear Non-Gaussian Acyclic Models under Small-Sample Settings}








\author*[1]{\fnm{Shun} \sur{Yanashima}}\email{yanashima-shun@fujitsu.com}

\author[1]{\fnm{Kentaro} \sur{Kanamori}}\email{k.kanamori@fujitsu.com}

\author[1]{\fnm{Hirofumi} \sur{Suzuki}}\email{suzuki-hirofumi@fujitsu.com}

\affil*[1]{\orgdiv{Artificial Intelligence Laboratory}, \orgname{Fujitsu Limited}, \orgaddress{\street{4-1-1 Kamikodanaka, Nakahara-ku}, \city{Kawasaki}, \postcode{211-8588}, \state{Kanagawa}, \country{Japan}}}


\abstract{
Causal discovery becomes particularly challenging when the available sample size is small relative to the number of variables. 
This challenge also arises in the linear non-Gaussian acyclic model (LiNGAM), an identifiable framework for causal discovery from observational data. 
DirectLiNGAM estimates a causal order, which arranges variables so that causes precede their effects, by sequentially identifying an exogenous variable and removing its linear effect from the remaining variables. 
We establish a structural limitation of this procedure: when the number of variables exceeds the sample size, repeated residualization necessarily becomes degenerate before the full causal order can be determined. 
Our analysis further reveals that each residual can be reconstructed using only a graph-determined subset of variables already placed earlier in the causal order, termed the \emph{active boundary}.
This result motivates AdaPS-LiNGAM (Adaptive Predecessor Selection LiNGAM), which reconstructs each residual directly from the original observations using an adaptively chosen sparse subset of those earlier variables.
The same subset-selection principle is also applied to the final pruning step for edge estimation. 
Experiments on synthetic data demonstrate that AdaPS-LiNGAM provides accurate causal-structure recovery in sample-limited settings and degrades more gradually as the sample size decreases.
}

\keywords{Causal discovery, LiNGAM, Sparse regression, Best-subset selection}



\maketitle

\section{Introduction}\label{sec_intro}

Causal discovery aims to infer causal relationships among variables from data and has been applied across a wide range of scientific and practical domains \citep{sachs2005causal,maathuis2010predicting,moneta2013causal,zhang2013integrated,runge2019inferring}. 
A central theoretical issue is that causal direction generally cannot be identified from observational data alone without additional assumptions \citep{spirtes2000causation}. 
The linear non-Gaussian acyclic model (LiNGAM) addresses this issue by exploiting the non-Gaussianity of disturbance variables to identify the causal structure from observational data under a linear acyclic model \citep{shimizu2006lingam}. This has led to a number of practical methods for estimating directed acyclic graphs within the LiNGAM framework.

A fundamental challenge for causal discovery in practice is that the available sample size is often limited. 
In this paper, we focus on settings in which the number of observations $n$ is small in absolute terms and the number of variables $p$ is comparable to or larger than $n$. 
Depending on the method, causal discovery in this regime may require either estimating regression models with many candidate predictors or assessing conditional independence using large conditioning sets, even though only limited data are available. 
Related problems have been studied extensively in small-sample and high-dimensional causal discovery. 
For example, previous studies have considered the reliability of causal discovery under limited data \citep{bromberg2009improving}, the use of lower-order independence information \citep{textor2015learning,wienobst2020recovering}, and approaches that reduce the effective size of the structure-learning problem \citep{cai2013sada,xie2019mislingam}. 
Taken together, these studies underscore the importance of controlling the dimension of each estimation or testing problem when data are scarce.

Within the LiNGAM framework, this concern is directly relevant because several widely used algorithms rely explicitly on regression residuals. 
DirectLiNGAM \citep{shimizu2011directlingam}, in particular, proceeds sequentially by identifying an exogenous variable and removing its linear effect from the remaining variables. 
At each step, the residuals of the remaining variables are obtained by regressing them on variables that have already been selected. 
Thus, as the procedure proceeds, the number of selected variables involved in the residualization grows cumulatively. 
This construction is central to the DirectLiNGAM procedure, but it also introduces a potential difficulty in finite samples: the regression underlying each residual can involve an increasingly large set of predictors. More generally, regression-based estimation with many predictors can become statistically unstable when the sample size is limited. High-dimensional LiNGAM methods have therefore exploited sparsity or otherwise restricted the set of variables used for estimation \citep{sogawa2011estimating,wang2020high}.

In this work, we show that cumulative residualization creates a problem more fundamental than increased estimation error. DirectLiNGAM performs each residualization as an orthogonal projection in the finite-dimensional sample space. Each nonzero step therefore reduces by one the dimension of the space in which the residual vectors lie. Because centering $n$ observations leaves at most $n-1$ dimensions available, the residual vectors must become zero after at most $n-1$ such reductions. Consequently, when $n<p$, this occurs before DirectLiNGAM has placed every variable in a sequence in which causes precede their effects. Thus, in the small-sample regime, the residualization procedure itself can become structurally degenerate, rather than merely lose statistical accuracy.

This observation raises a natural question: is it necessary to use all previously selected variables when constructing each residual? 
Our analysis shows that, provided that the variables selected so far follow an order consistent with the underlying causal structure, the residual of each remaining variable can be reconstructed using only a subset of them.
We formalize this subset through a graph-theoretic notion that we call the \emph{active boundary}. 
The active boundary consists of previously selected variables whose effects can enter the remaining part of the graph and subsequently reach the target variable without passing through another selected variable. 
Under a localized non-cancellation condition, we show that this set coincides with the support of the corresponding population reduced-form regression coefficients. Consequently, regression on the active boundary alone yields exactly the same residual as the cumulative DirectLiNGAM residualization and therefore preserves the corresponding DirectLiNGAM ordering step.
This characterization also clarifies why sparse residual reconstruction can be useful in small-sample settings. The number of variables required to reconstruct a residual is determined not by the total number of previously selected variables, but by the size of the active boundary induced by the causal graph and the current ordering step. Hence, even when the total number of variables exceeds the sample size, the relevant residual reconstruction can remain low-dimensional when the active boundaries are sufficiently sparse. This observation provides a structural route to avoiding the finite-sample degeneracy of cumulative residualization.

Based on this idea, we propose \emph{AdaPS-LiNGAM} (Adaptive Predecessor Selection LiNGAM), a method built on the DirectLiNGAM framework that adaptively restricts the predictors used for residual reconstruction. 
At each ordering step, AdaPS-LiNGAM returns to the original data and reconstructs the residual of each remaining variable using only a subset of the previously selected variables, rather than repeatedly residualizing the current residuals against all of them.
Because the target subset is unknown from finite data, we estimate it using adaptive best-subset selection together with an information criterion. 
Specifically, we use the ABESS algorithm \citep{zhu2020abess} to efficiently solve the best-subset selection problem at each ordering step. 
The same principle is also used in the final pruning stage to avoid unnecessary predictors.

Our contributions are as follows:
\begin{enumerate}
\item We characterize a previously unrecognized finite-sample degeneracy of the cumulative residualization procedure in DirectLiNGAM and establish that, when $n<p$, the residual vectors necessarily vanish while multiple variables can still remain unordered.
\item We introduce the active boundary and characterize it as the unique minimal predictor set required to reconstruct the DirectLiNGAM residual under a localized non-cancellation condition. 
\item We develop AdaPS-LiNGAM, an adaptive predecessor-selection procedure that operationalizes sparse residual reconstruction in finite samples, and evaluate the proposed method from two complementary perspectives: robustness to increasing dimensionality in the small-sample regime and degradation as the number of observations decreases.
\end{enumerate}

The remainder of this paper is organized as follows. 
Section~\ref{sec:related_work} reviews related work on small-sample and high-dimensional causal discovery, with particular emphasis on LiNGAM. 
Section~\ref{sec:prelim} introduces the LiNGAM model and the DirectLiNGAM procedure. 
Section~\ref{sec:limitation} analyzes the finite-sample degeneracy caused by cumulative residualization. 
Section~\ref{sec:method} presents AdaPS-LiNGAM, including a characterization of the active boundary for residual reconstruction, its relation to sample size, and the adaptive subset-selection procedure. 
Section~\ref{sec:experiments} reports the empirical evaluation. 
Finally, Section~\ref{sec:conclusion} concludes the paper.

\section{Related Work}
\label{sec:related_work}

\paragraph{Causal Discovery in Small-Sample Settings.}
Causal discovery from observational data can be challenging when only a limited number of observations is available. In this regime, statistical quantities used by causal discovery methods can become unreliable, particularly when they involve a large number of variables. This issue is well recognized in constraint-based methods, where conditional independence tests with large conditioning sets can have low statistical power in finite samples. \citet{bromberg2009improving} studied the reliability of causal discovery from small data sets and proposed an argumentation-based approach for dealing with inconsistencies among estimated independence relations. Several lines of work have therefore considered causal discovery using lower-order independence information. \citet{textor2015learning} studied the causal information that can be obtained from pairwise marginal independencies, and \citet{wienobst2020recovering} generalized this to conditional independencies with conditioning sets of size at most $k$. \citet{zhang2017causal} proposed regression-based conditional independence tests that, under an additive noise model, reformulate conditional independence testing in terms of independence between regression residuals. \citet{kocaoglu2023small} characterized the information that can be learned from conditional independence relations whose conditioning sets have size at most $k$ through the notion of $k$-Markov equivalence, and proposed the sound $k$-PC algorithm for this setting. Another line of work reduces the size of the causal discovery problem itself. \citet{cai2013sada} proposed SADA, a split-and-merge framework that exploits local sparsity to divide causal discovery into smaller subproblems, and demonstrated its applicability when the sample size is substantially smaller than the number of variables. These approaches differ in their assumptions and objectives, but share the broader observation that limiting the amount of statistical information used at each stage can be beneficial when data are scarce. We consider this issue for regression-based causal discovery, where the relevant statistical operation is the construction of regression residuals.

\paragraph{LiNGAM in Small-Sample Settings.}
LiNGAM provides a particularly relevant setting for studying this problem. \citet{shimizu2006lingam} established the identifiability of linear non-Gaussian acyclic models from observational data by exploiting the non-Gaussianity of the disturbance variables. DirectLiNGAM subsequently introduced a direct procedure for estimating the causal ordering by identifying exogenous variables from independence relationships between variables and regression residuals \citep{shimizu2011directlingam}. \citet{hyvarinen2013pairwise} proposed pairwise likelihood ratios as an alternative criterion for estimating non-Gaussian structural equation models and discussed their applicability when the number of variables exceeds the number of observations, as well as their behavior with relatively few observations. DirectLiNGAM is particularly relevant to the present work because, after a partial causal ordering has been obtained, its procedure constructs a residual for a remaining variable using variables that precede it in the ordering. When the sample size is limited, estimating this regression with a large predictor set can become difficult. \citet{xie2019mislingam} addressed small-sample LiNGAM more directly with MiS-LiNGAM, which first estimates a skeleton and then identifies minimal LiNGAM sets so that causal discovery can be performed on smaller subsets of variables. Thus, MiS-LiNGAM reduces the size of the causal-discovery problem itself, whereas our approach retains the DirectLiNGAM framework and reduces the predictor set used to construct each regression residual.

\paragraph{Causal Discovery in High-Dimensional Settings.}
A related but distinct body of work considers the high-dimensional setting, in which the number of variables is large relative to the number of observations. \citet{sogawa2011estimating} considered the estimation of exogenous variables in LiNGAM when the number of variables exceeds the number of observations, using a non-Gaussianity-based criterion. \citet{loh2014high} proposed a framework for high-dimensional linear SEMs that first estimates a sparse moralized graph and then restricts the subsequent search for a DAG to structures compatible with that graph. \citet{wang2020high} developed a high-dimensional causal discovery method for linear non-Gaussian models and established consistency under sparsity conditions involving the maximum in-degree. \citet{park2021learning} studied high-dimensional linear SEMs using $\ell_1$-regularized regression for causal-order and parent-set estimation, without relying on faithfulness, Gaussian errors, or equal error variances. These methods primarily address the difficulty caused by the growth of the ambient dimension relative to the sample size by exploiting structural sparsity or restricting the space of candidate models. The high-dimensional setting overlaps with the small-sample regime when $p$ is large relative to $n$, but the two notions are not identical: limited sample size can also be problematic when $n$ exceeds $p$. Our focus is on the latter finite-sample issue and, in particular, on how the number of predictors used in the regression underlying each causal-ordering step affects estimation.



\section{Preliminaries}\label{sec:prelim}

We use the following notation throughout the paper. 
Let $V:=\{1,\dots,p\}$ denote the variable index set. 
For a vector $a=(a_1,\dots,a_p)^\top\in \R^p$ and an index set $S\subseteq V$, $a_S$ denotes the subvector in $\R^{|S|}$ indexed by $S$. 
For a matrix $M\in\R^{p\times p}$ and index sets $S,T\subseteq V$, $M_{ST}$ denotes the submatrix in $\R^{|S|\times|T|}$ with rows indexed by $S$ and columns indexed by $T$.

\subsection{LiNGAM Model}\label{subsec:lingam}

Let $x=(x_1,\dots,x_p)^\top$ be the observed random vector. The linear non-Gaussian acyclic model (LiNGAM) \citep{shimizu2006lingam} assumes the linear structural equations
\begin{equation}\label{eq:lingam}
 x_i=\sum_{j=1}^{p} b_{ij}x_j+e_i,
 \qquad i=1,\dots,p,
\end{equation}
where $b_{ij}\in\R$ represents the direct effect from $x_j$ to $x_i$, and $e_i$ is an exogenous noise variable. 
The noise variables $e_1,\dots,e_p$ are mutually independent, continuous random variables with non-Gaussian distributions and non-zero variances. 
We assume that the observed variables have been centered, so that $\E[x_i]=0$ for all $i$. This entails no loss of generality for the linear model.

Let $B=(b_{ij})\in\R^{p\times p}$ and $e=(e_1,\dots,e_p)^\top$. 
Then the LiNGAM model in~\eqref{eq:lingam} can be written in vector form as
\begin{equation*}
 x=Bx+e.
\end{equation*}
The coefficient matrix $B$ defines a directed graph $G=(V,E)$ by $j\to i$ if and only if $b_{ij}\neq0$. LiNGAM assumes that $G$ is a directed acyclic graph (DAG).

For distinct nodes $i,j\in V$, write $j\leadsto i$ when there exists a directed path from $j$ to $i$, and define
\begin{align*}
 \Pa_i &:= \{j\in V\setminus\{i\}: b_{ij}\neq0\},\\
 \Ch_i &:= \{j\in V\setminus\{i\}: b_{ji}\neq0\},\\
 \An_i &:= \{j\in V\setminus\{i\}: j\leadsto i\},\\
 \De_i &:= \{j\in V\setminus\{i\}: i\leadsto j\},\\
 \NonDesc_i &:= V\setminus\bigl(\De_i\cup\{i\}\bigr).
\end{align*}
Thus $\Pa_i$ is the parent set, $\Ch_i$ is the child set, $\An_i$ is the strict ancestor set, $\De_i$ is the strict descendant set, and $\NonDesc_i$ is the set of strict non-descendants. Equivalently, $j\in\Pa_i$ if and only if $j\to i$, and $j\in\Ch_i$ if and only if $i\to j$. In a DAG, $\Pa_i\subseteq\An_i\subseteq\NonDesc_i$.

A causal order of the variables is defined by a permutation $\pi:V\to V$ such that
\[
b_{ij}\neq0
 \quad\Longrightarrow\quad
 \pi(j)<\pi(i).
\]
Thus $\pi(i)$ is the position of variable $x_i$ in the causal order. 
Equivalently, simultaneously permuting the rows and columns of $B$ according to increasing $\pi(i)$ yields a strictly lower-triangular matrix. Because $G$ is acyclic, at least one causal order exists, although it need not be unique.

A fundamental result of LiNGAM is that, under the above model assumptions, the causal structure encoded by $B$ is identifiable from the observed distribution \citep{shimizu2006lingam}. This identifiability rests on the following Darmois--Skitovich theorem.
\begin{theorem}[Darmois--Skitovich]\label{thm:ds}
Let $s_1,\dots,s_m$ be independent random variables, and consider two linear combinations $u=\sum_{k=1}^m\alpha_k s_k$ and $v=\sum_{k=1}^m\gamma_k s_k$. If $u$ and $v$ are independent, then for every $k$ with $\alpha_k\gamma_k\neq0$, the variable $s_k$ is Gaussian.
\end{theorem}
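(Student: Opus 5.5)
We will follow the classical characteristic-function argument (Darmois, Skitovich; cf.\ Kagan--Linnik--Rao). Let $f_k(t)=\E[e^{\mathrm{i}ts_k}]$. Independence of $u$ and $v$, together with the mutual independence of the $s_k$, gives for all $t,s\in\R$
\[
\prod_{k=1}^m f_k(\alpha_k t+\gamma_k s)=\prod_{k=1}^m f_k(\alpha_k t)\,f_k(\gamma_k s).
\]
If $\alpha_k=0$, the factor $f_k(\gamma_k s)$ appears on both sides. Likewise, if $\gamma_k=0$, the factor $f_k(\alpha_k t)$ appears on both sides. These factors do not vanish near the origin, so we may cancel them there. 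We are left with the same identity over the index set $K=\{k:\alpha_k\gamma_k\neq0\}$, valid in a neighbourhood of $(0,0)$. By rescaling $s_k$ we may take $\alpha_k=1$ for $k\in K$. We then group the indices by the ratio $c=\gamma_k/\alpha_k$ and set $\tilde s_c=\sum_{k:\gamma_k/\alpha_k=c}s_k$. The $\tilde s_c$ are independent, and for each distinct $c$ we obtain a single function $g_c$, the characteristic function of $\tilde s_c$.

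Next we take continuous logarithms $\psi_c=\log g_c$ on a small interval around $0$, with $\psi_c(0)=0$. This turns the identity into
\[
\sum_c \psi_c(t+cs)=A(t)+B(s)
\]
near the origin, where the slopes $c$ are pairwise distinct and nonzero. We then apply finite-difference operators in $(t,s)$. The operator $\Delta_{(h,0)}$ kills $B$, and for each $c'\neq c$ the operator $\Delta_{(-c'h',h')}$ kills $\psi_{c'}$. Composing these eliminates every term except one $\psi_c$, and also $A$ and $B$. This shows that $\psi_c$ satisfies $\Delta_{h_1}\cdots\Delta_{h_r}\psi_c\equiv0$ locally, with $r$ at most the number of groups plus one. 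A continuous function with vanishing $r$-th differences is a polynomial, so $\psi_c$ agrees with a polynomial near $0$.

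The main obstacle is passing from this local statement to a global one. We must show that $g_c$ has no zeros and equals $\exp(\text{polynomial})$ on all of $\R$. For this we use the standard extension argument. Suppose $\rho$ is the first positive zero of some $g_c$. Evaluating the original product identity at suitable points $(t,s)$ with $t+cs$ inside the zero-free region forces a contradiction. Iterating this shows that the polynomial representation propagates outward step by step. Once $g_c=e^{P_c}$ globally, Marcinkiewicz's theorem gives $\deg P_c\le2$, so each $\tilde s_c$ is Gaussian. Finally, $\tilde s_c$ is a sum of independent variables, and by Cramér's decomposition theorem each summand $s_k$ with $k\in K$ is Gaussian, as required.
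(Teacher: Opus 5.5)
The paper does not prove this theorem; it quotes the classical Darmois--Skitovich result as a black box, so your outline has to stand on its own. It follows the classical characteristic-function route (Skitovich, Darmois; Kagan--Linnik--Rao), and the local part is sound: the factorized identity, cancellation of the factors with $\alpha_k\gamma_k=0$ near the origin, grouping by $c=\gamma_k/\alpha_k$, differencing to show that each $\psi_c$ is a polynomial near $0$, and Cramér's theorem at the end. One small omission in the differencing step: you also need an operator $\Delta_{(0,h)}$ to remove $A(t)$. It does not annihilate $\psi_c$ because $c\neq0$, and it is what makes $r$ equal to the number of groups plus one.

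The genuine gap is the local-to-global step. You cancelled the factors with $\alpha_k\gamma_k=0$ only near $(0,0)$. The zero-free argument, however, evaluates the identity at points with $t+c_0s=\rho$, which may lie far from the origin. There only the full identity is available. It reads $\Phi_0(t)\Psi_0(s)L(t,s)=\Phi_0(t)\Psi_0(s)R(t,s)$, where $\Phi_0$ and $\Psi_0$ are the characteristic functions of $\sum_{\gamma_k=0}\alpha_ks_k$ and $\sum_{\alpha_k=0}\gamma_ks_k$, and $L,R$ are the two sides of your reduced identity. Characteristic functions can vanish on whole intervals. For example, if $\Phi_0(t)=\max(1-|t|,0)$ and $\Psi_0(s)=\max(1-|s|,0)$, the identity is $0=0$ unless $|t|,|s|<1$. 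So every informative instance has $|t+cs|<1+|c|$. Nothing in your argument excludes a first zero at $\rho\ge1+\max_c|c|$, and such a zero never enters any usable equation. Because the statement explicitly allows such $s_k$, the extension argument does not go through.

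The standard repair avoids globalizing the functional equation altogether and uses the local form of Marcinkiewicz's theorem. If $g_c=e^{P_c}$ near $0$, then $g_c$ is infinitely differentiable at $0$. Hence $\tilde s_c$ has moments of all orders, with $\E\tilde s_c^{\,n}=(-\mathrm{i})^n(e^{P_c})^{(n)}(0)$. Since $e^{P_c}$ is entire, Cauchy estimates give $\E\tilde s_c^{\,2k}/(2k)!\le C_RR^{-2k}$ for every $R$, so $\E e^{r|\tilde s_c|}<\infty$ for all $r>0$. Therefore $g_c$ extends to an entire function, equals $e^{P_c}$ on all of $\R$ by the identity theorem, and Marcinkiewicz gives $\deg P_c\le2$. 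This uses the equation only near the origin, which is where you actually have it.

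When every $\alpha_k\gamma_k\neq0$, nothing is discarded and your zero-free argument does work, provided you choose the points correctly. Take $t=(1-\epsilon)\rho$ and $s=\epsilon\rho/c_0$ with $\epsilon$ small. Then the left side contains $g_{c_0}(\rho)=0$, while $|t|<\rho$ and $|cs|<\rho$ for all $c$ keep the right side nonzero. The condition is on $t$ and $cs$, not on $t+cs$ as you wrote.
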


The contrapositive states that if a non-Gaussian independent component appears in both linear combinations with nonzero coefficients, then the two combinations cannot be independent. LiNGAM exploits this to identify the causal direction. 
We illustrate this with the simplest two-variable example. 
Suppose the true structure is $x_1\to x_2$ with $x_1=e_1$ and $x_2=b_{21}x_1+e_2$. 
In the correct direction, regressing $x_2$ on $x_1$ yields the residual $e_2$, which is independent of $x_1\,(=e_1)$. 
On the other hand, in the wrong direction, regressing $x_1$ on $x_2$ yields a residual that is a linear combination containing both $e_1$ and $e_2$ with nonzero coefficients. 
Since it shares a non-Gaussian component with $x_2$, (the contrapositive of) the Darmois--Skitovich theorem implies that they cannot be independent. 
This asymmetry makes the causal direction identifiable under non-Gaussianity.

\subsection{DirectLiNGAM}\label{subsec:directlingam}

DirectLiNGAM \citep{shimizu2011directlingam} estimates a causal order sequentially by identifying an exogenous variable among the remaining variables and removing its linear effect from the others. 
At the population level, the identification principle can be expressed through pairwise regression residuals. 
For two variables $u$ and $v$ under consideration, define the linear-regression residual of $u$ on $v$ by
\[
r_{u\mid v}
 :=u-\frac{\Cov(u,v)}{\Var(v)}v.
\]
Under the LiNGAM assumptions in Section~\ref{subsec:lingam}, a variable is exogenous if and only if it is independent of the residual obtained by regressing every other variable under consideration on it. 

Let
\[
 X=(X_1,\dots,X_p)\in\R^{n\times p}
\]
be the centered data matrix, where $X_i\in\R^n$ is the observation vector for $x_i$. 
At iteration $t$ of the sample-level ordering procedure of DirectLiNGAM, let $\widehat U_t\subseteq V$ denote the indices already placed in the first $t$ positions and let
\[
\widehat V_t:=V\setminus \widehat U_t,
 \qquad
 \widehat U_0=\varnothing,
 \qquad
 \widehat V_0=V.
\]
For each $i\in\widehat V_t$, write $R_i^{(t)}\in\R^n$ for the current residual vector, with $R_i^{(0)}:=X_i$, and collect them in the residual data matrix
\[
 R^{(t)}:=(R_i^{(t)})_{i\in\widehat V_t}\in\R^{n\times|\widehat V_t|}.
\]

For a sample matrix $Y=(Y_i)_{i\in S}\in\R^{n\times|S|}$ and distinct $i,j\in S$, define the sample pairwise residual vector
\begin{equation}\label{eq:direct-pairwise-residual}
 R_{i\mid j}(Y)
 :=Y_i-\frac{Y_j^\top Y_i}{\|Y_j\|_2^2}Y_j,
\end{equation}
whenever $Y_j\neq\mathbf 0$. 
Let $\mathcal D_n(a,b)$ denote the sample dependence measure used to operationalize the independence criterion, with smaller values indicating weaker dependence. 
The concrete form of $\mathcal D_n$ depends on the DirectLiNGAM implementation. 
Define
\begin{equation}\label{eq:direct-root-score}
 T_n(j;Y)
 :=\sum_{i\in S\setminus\{j\}}
 \mathcal D_n\!\left(Y_j,R_{i\mid j}(Y)\right),
 \qquad j\in S,
\end{equation}
for $Y=(Y_i)_{i\in S}$. 
DirectLiNGAM then chooses
\[
\widehat j_t\in\arg\min_{j\in\widehat V_t} T_n(j;R^{(t)}).
\]
After selecting $\widehat j_t$, DirectLiNGAM updates every remaining residual vector by
\begin{equation}\label{eq:direct-sequential-update}
 R_i^{(t+1)}
 :=R_{i\mid \widehat j_t}(R^{(t)}),
 \qquad i\in\widehat V_t\setminus\{\widehat j_t\},
\end{equation}
and sets
\begin{equation*}
 \widehat U_{t+1}:=\widehat U_t\cup\{\widehat j_t\},
 \qquad
 \widehat V_{t+1}:=\widehat V_t\setminus\{\widehat j_t\}.
\end{equation*}
Hence, after $t$ variables have been selected, every remaining residual vector has undergone $t$ successive least-squares projections. This cumulative residualization is analyzed in Section~\ref{sec:limitation}.

Once all variables have been ordered, DirectLiNGAM estimates the coefficient matrix $B$ by regressing each variable on variables preceding it in the estimated causal order and removing unnecessary edges. 
We refer to this final edge-selection step as \emph{pruning}. 
Adaptive Lasso \citep{zou2006adaptivelasso} and related procedures are commonly used for this purpose. Algorithm~\ref{alg:directlingam} summarizes the sample-level procedure.

\begin{algorithm}[t]
\caption{DirectLiNGAM}\label{alg:directlingam}
\begin{algorithmic}[1]
\Require Centered data matrix $X=(X_1,\dots,X_p)\in\R^{n\times p}$
\State $\widehat U_0\gets\varnothing$, $\widehat V_0\gets V$, and $R_i^{(0)}\gets X_i$ for all $i\in V$
\For{$t=0,\dots,p-2$}
  \State $R^{(t)}\gets (R_i^{(t)})_{i\in\widehat V_t}$
  \State choose $\widehat j_t\in\arg\min_{j\in\widehat V_t} T_n(j;R^{(t)})$
  \State $\widehat\pi(\widehat j_t)\gets t+1$
  \For{each $i\in\widehat V_t\setminus\{\widehat j_t\}$}
    \State $R_i^{(t+1)}\gets R_{i\mid \widehat j_t}(R^{(t)})$
  \EndFor
  \State $\widehat U_{t+1}\gets \widehat U_t\cup\{\widehat j_t\}$ and $\widehat V_{t+1}\gets \widehat V_t\setminus\{\widehat j_t\}$
\EndFor
\State assign $\widehat\pi(j)\gets p$ for the unique $j\in\widehat V_{p-1}$
\State estimate $\widehat B$ by regressing each variable on its predecessors under $\widehat\pi$ and pruning unnecessary edges
\State \Return $\widehat\pi,\widehat B$
\end{algorithmic}
\end{algorithm}


\section{Limitations of DirectLiNGAM in Small-Sample Settings}\label{sec:limitation}

This section establishes a finite-sample limitation of DirectLiNGAM: when $n<p$, its cumulative residualization becomes degenerate before the full causal order can be determined. 
The DirectLiNGAM update in~\eqref{eq:direct-sequential-update} is an orthogonal projection in the finite-dimensional sample space. 
In Section~\ref{subsec:subspace-view}, we first show that each residualization step removes one dimension from the subspace spanned by the residuals whenever the selected residual is nonzero, so this dimension reduction can occur at most $\operatorname{rank}(X)\le n-1$ times. 
In Section~\ref{subsec:residual-degeneracy}, we then establish that, when $n<p$, this rank constraint forces the remaining residual vectors to vanish while at least two variables remain unordered. 
This limitation follows from the finite-sample geometry of the residual update itself.

\subsection{Projection Geometry of Sequential Residualization}\label{subsec:subspace-view}

We first analyze the projection geometry of sequential residualization and show that each projection reduces the dimension of the subspace spanned by the residuals by one whenever the selected residual is nonzero.

Let $\mathbf 1_n\in\R^n$ denote the all-ones vector and define the centered sample subspace
\[
\mathcal H_0:=\{v\in\R^n:\mathbf 1_n^\top v=0\},
 \qquad \dim(\mathcal H_0)=n-1.
\]
Because every column of $X$ is centered, $X_i\in\mathcal H_0$ for all $i\in V$. Let
\begin{equation}\label{eq:data-subspace}
 \mathcal S_0:=\operatorname{span}\{X_i:i\in V\},
 \qquad r:=\dim(\mathcal S_0)=\operatorname{rank}(X).
\end{equation}
Since $\mathcal S_0\subseteq\mathcal H_0$, we have
\[
r\le n-1.
\]
Let $I_n$ denote the $n\times n$ identity matrix. For a nonzero vector $z\in\R^n$, define $z^\perp:=\{v\in\R^n:z^\top v=0\}$ and the orthogonal projector onto $z^\perp$,
\[
P_{z^\perp}:=I_n-\frac{zz^\top}{z^\top z}.
\]
If $\widehat j_t$ is selected at iteration $t$ of DirectLiNGAM, define $g_t:=R_{\widehat j_t}^{(t)}$.
Then~\eqref{eq:direct-sequential-update} can be written as
\begin{equation}\label{eq:projection-update}
 R_i^{(t+1)}=P_{g_t^\perp} R_i^{(t)},
 \qquad i\in\widehat V_{t+1},
\end{equation}
which is defined only when $g_t\neq\mathbf 0$. 
For $t\ge1$, define
\[
\mathcal S_t:=\mathcal S_0\cap g_0^\perp\cap\cdots\cap g_{t-1}^\perp.
\]

With these definitions, the following lemma shows that, provided that $g_0,\dots,g_{t-1}$ are nonzero, the updated residuals span the nested subspaces $\mathcal S_t$, whose dimension decreases by one at each step.

\begin{lemma}[Nested residual subspaces]\label{lem:nested-residual-subspaces}
Suppose $g_0,\dots,g_{t-1}$ are nonzero. Then
\[
\operatorname{span}\{R_i^{(t)}:i\in\widehat V_t\}=\mathcal S_t,
 \qquad
 \dim(\mathcal S_t)=r-t.
\]
\end{lemma}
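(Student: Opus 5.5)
The proof goes by induction on $t$. The base case $t=0$ holds by definition: $\widehat V_0=V$, $R_i^{(0)}=X_i$, and so $\operatorname{span}\{R_i^{(0)}\}=\mathcal S_0$ with dimension $r$.

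For the inductive step, assume the claim holds at step $t$ and that $g_t\neq\mathbf 0$. Write $\mathcal T_t:=\operatorname{span}\{R_i^{(t)}:i\in\widehat V_t\}=\mathcal S_t$. Since $g_t=R^{(t)}_{\widehat j_t}$ is itself one of the spanning vectors, $g_t\in\mathcal S_t$. The key observation is a general fact: if $z\neq\mathbf 0$ lies in a subspace $\mathcal S$, then $P_{z^\perp}$ maps $\mathcal S$ onto $\mathcal S\cap z^\perp$, and $\dim(\mathcal S\cap z^\perp)=\dim\mathcal S-1$.

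\emph{Containment.} For $v\in\mathcal S$, the vector $P_{z^\perp}v=v-\frac{z^\top v}{z^\top z}z$ lies in $\mathcal S$, because $\mathcal S$ is closed under linear combinations and $z\in\mathcal S$. It is also orthogonal to $z$.

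\emph{Surjectivity.} $P_{z^\perp}$ fixes every vector of $z^\perp$, so it fixes every vector of $\mathcal S\cap z^\perp$.

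\emph{Dimension.} The linear functional $v\mapsto z^\top v$ on $\mathcal S$ is nonzero, since it sends $z$ to $\|z\|^2>0$. Hence its kernel $\mathcal S\cap z^\perp$ has codimension one in $\mathcal S$.

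Now apply this with $\mathcal S=\mathcal S_t$ and $z=g_t$. By \eqref{eq:projection-update},
\[
\operatorname{span}\{R_i^{(t+1)}:i\in\widehat V_{t+1}\}
=P_{g_t^\perp}\operatorname{span}\{R_i^{(t)}:i\in\widehat V_{t+1}\}.
\]
The spanning set on the right omits $\widehat j_t$. This omission is harmless because $P_{g_t^\perp}g_t=\mathbf 0$. Therefore
\[
P_{g_t^\perp}\operatorname{span}\{R_i^{(t)}:i\in\widehat V_{t+1}\}
=P_{g_t^\perp}\operatorname{span}\{R_i^{(t)}:i\in\widehat V_t\}
=P_{g_t^\perp}\mathcal S_t
=\mathcal S_t\cap g_t^\perp
=\mathcal S_{t+1}.
\]
Its dimension is $(r-t)-1=r-(t+1)$, which completes the induction.

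The main subtlety is the step $g_t\in\mathcal S_t$, on which both the containment and the dimension drop depend. This is exactly where the inductive identification of the span with $\mathcal S_t$ is needed; a mere inclusion would not suffice. Everything else is routine linear algebra. The bookkeeping about omitting $\widehat j_t$ is handled by $P_{g_t^\perp}g_t=\mathbf 0$.
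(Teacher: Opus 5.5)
Your proof is correct and is essentially the paper's induction; the only variation is that you obtain $P_{g_t^\perp}\mathcal S_t=\mathcal S_t\cap g_t^\perp$ from containment plus the fact that $P_{g_t^\perp}$ fixes $g_t^\perp$, whereas the paper notes that the restricted projector has kernel $\operatorname{span}\{g_t\}$ and then matches dimensions. One small correction to your closing remark: $g_t\in\mathcal S_t$ already follows from the inclusion $\operatorname{span}\{R_i^{(t)}:i\in\widehat V_t\}\subseteq\mathcal S_t$, and the full equality is needed only where you replace that span by $\mathcal S_t$ inside $P_{g_t^\perp}$.
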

\begin{proof}
For $t=0$ the claim follows from~\eqref{eq:data-subspace}, since $R_i^{(0)}=X_i$ and $V=\widehat V_0$. 

Suppose $\operatorname{span}\{R_i^{(t)}:i\in\widehat V_t\}=\mathcal S_t$. 
The selected direction $g_t$ is one of these residual vectors, so $g_t\in\mathcal S_t$, and it is nonzero by hypothesis. 
Hence $\mathcal S_{t+1}=\mathcal S_t\cap g_t^\perp$ has dimension $\dim(\mathcal S_t)-1$. 
The restriction of $P_{g_t^\perp}$ to $\mathcal S_t$ has kernel $\operatorname{span}\{g_t\}$, so its image has dimension $\dim(\mathcal S_t)-1$. 
This image is contained in $\mathcal S_{t+1}$, and the two spaces are therefore equal. 
By~\eqref{eq:projection-update} the image is spanned by the updated residuals, and $P_{g_t^\perp} g_t=\mathbf 0$, so removing the selected index leaves the span unchanged.
Thus $\operatorname{span}\{R_i^{(t+1)}:i\in\widehat V_{t+1}\}=\mathcal S_{t+1}$.
Induction from $\dim(\mathcal S_0)=r$ gives $\dim(\mathcal S_t)=r-t$.
\end{proof}

\subsection{Vanishing Residuals under Cumulative Residualization}\label{subsec:residual-degeneracy}

We next apply Lemma~\ref{lem:nested-residual-subspaces} to characterize the finite-sample degeneracy caused by cumulative residualization when $n<p$.

\begin{proposition}[Vanishing residuals under cumulative residualization]
\label{prop:vanishing-residuals}
Suppose $n<p$, let $r=\operatorname{rank}(X)$, and suppose $g_0,\dots,g_{r-1}$ are nonzero. 
Then, for $t=0,\dots,r$,
\[
 \operatorname{rank}R^{(t)}=r-t,
 \qquad
 |\widehat V_t|=p-t.
\]
At $t=r$,
\[
 R_i^{(r)}=\mathbf 0
 \quad\text{for all }i\in\widehat V_r,
 \qquad
 |\widehat V_r|=p-r\ge p-n+1\ge2.
\]
Thus the remaining residual vectors vanish no later than step $n-1$ while at least two variables remain unordered.
\end{proposition}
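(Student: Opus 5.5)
The plan is to read everything off Lemma~\ref{lem:nested-residual-subspaces}. Since $g_0,\dots,g_{r-1}$ are nonzero, the lemma applies for every $t=0,\dots,r$. For each such $t$, the column space of $R^{(t)}$ is $\operatorname{span}\{R_i^{(t)}:i\in\widehat V_t\}=\mathcal S_t$. Hence $\operatorname{rank}R^{(t)}=\dim(\mathcal S_t)=r-t$.

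One technical point needs checking. The lemma at step $t$ only requires $g_0,\dots,g_{t-1}$ to be nonzero, so for $t\le r$ the hypothesis uses at most $g_0,\dots,g_{r-1}$, which is exactly what the proposition assumes. I also need the projections to be well defined, which is the case because each $g_s\neq\mathbf 0$ for $s<r$.

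The cardinality claim $|\widehat V_t|=p-t$ follows by induction on $t$. We have $|\widehat V_0|=p$, and each iteration removes exactly one index $\widehat j_t$ from $\widehat V_t$. For this to make sense the loop must actually reach iteration $t$. Algorithm~\ref{alg:directlingam} runs for $t\le p-2$, and we have $r\le n-1\le p-2$ because $n<p$. So the iterations $0,\dots,r-1$ are all executed and the sets $\widehat V_t$ are defined for $t\le r$.

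At $t=r$ the rank is $0$, so every column $R_i^{(r)}$, $i\in\widehat V_r$, is the zero vector. For the count, $r\le n-1$ gives $|\widehat V_r|=p-r\ge p-n+1$. Since $n<p$ means $n\le p-1$, this is at least $2$.

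The final sentence of the proposition follows in two cases:
\begin{itemize}
\item If all $g_s$ are nonzero up to $r-1$, vanishing occurs exactly at step $r\le n-1$.
\item If some earlier $g_s$ is already zero, the residual update breaks down even sooner.
\end{itemize}
In both cases degeneracy occurs no later than step $n-1$, with at least $p-(n-1)\ge2$ variables still unordered.

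I do not expect a real obstacle, since the argument is bookkeeping on top of the lemma. The only care needed is to confirm that the algorithm's loop range covers $t\le r$ and to state the rank/span identification explicitly.
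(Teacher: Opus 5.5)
Your proposal is correct and matches the paper's proof essentially step for step: $\operatorname{rank}R^{(t)}=\dim(\mathcal S_t)=r-t$ via Lemma~\ref{lem:nested-residual-subspaces}, $|\widehat V_t|=p-t$ because one index is removed per step, vanishing at $t=r$ because $\mathcal S_r=\{\mathbf 0\}$, and $p-r\ge p-n+1\ge 2$ from $r\le n-1$. Your check that the loop actually reaches iteration $r-1$ (since $r\le n-1\le p-2$) is a useful detail the paper leaves implicit. Your second bullet, where some earlier $g_s$ is zero, contradicts the proposition's hypotheses and is not needed for the proof.
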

\begin{proof}
By Lemma~\ref{lem:nested-residual-subspaces},
\[
 \operatorname{rank}R^{(t)}=\dim(\mathcal S_t)=r-t.
\]
Since one index is removed at each step, $|\widehat V_t|=p-t$. 
At $t=r$, $\mathcal S_r=\{\mathbf 0\}$, so every remaining residual vector vanishes.
Finally, $r\le n-1$ gives
\[
 p-r\ge p-n+1\ge2.
\]
\end{proof}

Proposition~\ref{prop:vanishing-residuals} implies that, when $n<p$,
DirectLiNGAM cannot complete causal-order estimation through cumulative residualization: either a zero selected residual is encountered before step $r$, or all remaining residuals vanish at step $r$ while at least two variables remain unordered.
In the latter case, under~\eqref{eq:direct-pairwise-residual}, $R_{i\mid j}(R^{(r)})$ is undefined for every remaining candidate $j$ because $R_j^{(r)}=\mathbf 0$. 
Consequently, the next DirectLiNGAM score in~\eqref{eq:direct-root-score} is not defined.

The relevant distinction is whether $r<p-1$. 
If $r=p-1$, simultaneous vanishing occurs only at $t=p-1$, when a single variable remains and no further ordering decision is required. 
If $r<p-1$, all remaining residuals vanish while at least two variables remain unordered. 
Since $n<p$ implies $r\le n-1<p-1$, the small-sample regime necessarily falls into the latter case.

When $n<p$ and the centered data contain no additional exact linear dependencies, in the sense that every subset of at most $n-1$ columns of $X$ is linearly independent, we have $r=n-1$. Under the same condition, no remaining residual vanishes before $t=r$. Thus the remaining residuals vanish simultaneously at step $n-1$.


\section{Proposed Method: AdaPS-LiNGAM}\label{sec:method}
The analysis in Section~\ref{sec:limitation} traces the failure of DirectLiNGAM in small-sample settings to its sequential residual updating. 
To address this issue, we propose Adaptive Predecessor Selection LiNGAM (AdaPS-LiNGAM). 
It avoids the cumulative residualization by returning to the original data at every ordering step and reconstructing each residual using only a subset of the previously selected variables.

First, in Section~\ref{subsec:theory}, we introduce the graph-theoretic notion of the \emph{active boundary}, which captures the previously selected nodes whose effects can still reach a remaining node without passing through another selected node. We then show that, under a localized non-cancellation condition, this graph-defined set is exactly the unique minimal predictor set required for exact residual reconstruction. 
Next, in Section~\ref{subsec:boundary-sparsity}, we study how large the active boundaries can become across ordering steps and establish a sample-size condition under which residual reconstruction using them remains feasible. 
Finally, in Section~\ref{subsec:algo}, we show how the predictor sets characterized by the active boundaries can be estimated from finite samples using adaptive best-subset selection, leading to the AdaPS-LiNGAM algorithm.

\subsection{Characterization of Sparse Residual Reconstruction}
\label{subsec:theory}
This subsection characterizes the predictor set required for residual reconstruction from the original data. 
We define the \emph{active boundary} and show, under a non-cancellation condition, that it coincides with the support of the corresponding population regression coefficients.

Fix a causal order $\pi:V\to V$. 
For $t=0,\dots,p$, define
\[
U_t^\pi:=\{i\in V:\pi(i)\le t\},
\qquad
V_t^\pi:=V\setminus U_t^\pi.
\]
We refer to $U_t^\pi$ as the \emph{prefix set} and to $V_t^\pi$ as the \emph{remaining set}. 
When the causal order is fixed, we write $U_t$ and $V_t$ for $U_t^\pi$ and $V_t^\pi$, respectively. 
Because $U_t$ is a prefix of a causal order, it is ancestral: no node in $V_t$ is an ancestor of a node in $U_t$.

We first record an elementary projection fact, which allows the sequential residualization of Section~\ref{subsec:directlingam} to be replaced by a single joint regression on $x_{U_t}$.

For each $t$, define the reduced-form coefficient matrix
\[
 \Theta^{(t)}
 :=(\theta_{ij}^{(t)})_{i\in V_t,\;j\in U_t}
 :=(I-B_{V_tV_t})^{-1}B_{V_tU_t}.
\]

\begin{lemma}[Reduced regression on a prefix set]\label{lem:reduced-regression}
For a prefix set $U_t$, the successive DirectLiNGAM residuals satisfy
\begin{align}
 x_{V_t}
 &=\Theta^{(t)}x_{U_t}+r_{V_t}^{(t)}, \label{eq:reduced-regression}\\
r_{V_t}^{(t)}
 &=(I-B_{V_tV_t})^{-1}e_{V_t}. \label{eq:reduced-residual}
\end{align}
Thus $r_{V_t}^{(t)}$ is also the residual of the linear regression of $x_{V_t}$ on $x_{U_t}$.
\end{lemma}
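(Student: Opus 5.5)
Split the structural equations $x=Bx+e$ by the partition $V=U_t\cup V_t$. Because $U_t$ is a prefix of the causal order $\pi$, no node of $V_t$ is a parent of a node of $U_t$, so $B_{U_tV_t}=0$. The block of equations indexed by $V_t$ then reads
\[
x_{V_t}=B_{V_tV_t}x_{V_t}+B_{V_tU_t}x_{U_t}+e_{V_t}.
\]
The matrix $B_{V_tV_t}$ is the coefficient matrix of the induced subgraph on $V_t$, which is acyclic. After permuting rows and columns according to $\pi$ it is strictly lower triangular, hence nilpotent, so $I-B_{V_tV_t}$ is invertible. Solving for $x_{V_t}$ gives
\[
x_{V_t}=(I-B_{V_tV_t})^{-1}B_{V_tU_t}x_{U_t}+(I-B_{V_tV_t})^{-1}e_{V_t},
\]
which is \eqref{eq:reduced-regression} with the residual $r^{(t)}_{V_t}$ given by \eqref{eq:reduced-residual}.

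Next I show that this decomposition is exactly the population least-squares regression of $x_{V_t}$ on $x_{U_t}$. The top block $x_{U_t}=B_{U_tU_t}x_{U_t}+e_{U_t}$ gives $x_{U_t}=(I-B_{U_tU_t})^{-1}e_{U_t}$, so $x_{U_t}$ is a linear function of $e_{U_t}$ alone. The vector $r^{(t)}_{V_t}$ is a linear function of $e_{V_t}$ alone. By mutual independence of the noises, together with centering and finite variances, $\Cov(r^{(t)}_{V_t},x_{U_t})=0$. A decomposition $x_{V_t}=\Theta x_{U_t}+r$ with $r$ uncorrelated with $x_{U_t}$ is the population normal-equation solution. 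Since $\Cov(x_{U_t})=(I-B_{U_tU_t})^{-1}\Cov(e_{U_t})(I-B_{U_tU_t})^{-\top}$ is positive definite (the noise variances are nonzero), the coefficient is uniquely $\Theta^{(t)}$ and the residual is uniquely $r^{(t)}_{V_t}$.

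It remains to identify this joint residual with the successive DirectLiNGAM residuals, at the population level, taken along the order $\pi$. I argue by induction on $t$, using the Frisch--Waugh--Lovell principle in $L^2$. Let $j$ be the node with $\pi(j)=t+1$. The step-$t$ DirectLiNGAM residual of $x_i$ is the $L^2$ projection of $x_i$ onto the orthogonal complement of $\operatorname{span}\{x_{U_t}\}$. Regressing this residual on the current residual of $x_j$ (the pairwise update $r_{u\mid v}$) projects further onto the orthogonal complement of the span of that residual within the complement. The current residual of $x_j$ together with $x_{U_t}$ spans the same space as $x_{U_{t+1}}$. The composition of the two projections is therefore the projection onto the orthogonal complement of $\operatorname{span}\{x_{U_{t+1}}\}$, which is the joint residual.

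Two points need care. First, the pairwise update requires the current residual of $x_j$ to be nonzero. This holds because $r^{(t)}_j$ contains $e_j$ with coefficient $1$, the diagonal of $(I-B_{V_tV_t})^{-1}$ being $1$, and $\Var(e_j)>0$. Second, the spans of the successive residuals must be carefully matched to the spans of the prefix variables. I expect this bookkeeping to be the main, though still routine, obstacle. The rest is algebra made possible by $B_{U_tV_t}=0$ and the independence of the noises.
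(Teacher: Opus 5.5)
Your proof is correct and follows essentially the same route as the paper. Like the paper, you block-partition using $B_{U_tV_t}=0$, solve for $x_{V_t}$, use independence of $e_{U_t}$ and $e_{V_t}$ to identify the decomposition as the population projection, and then match that projection with the successive residuals. Your Frisch--Waugh--Lovell induction, including the check that the selected residual $r_j^{(t)}$ is nonzero because it carries $e_j$ with unit coefficient, spells out the step the paper compresses into the remark that successive residualization removes orthogonal directions spanning $\operatorname{span}(x_{U_t})$.
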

\begin{proof}
Because $U_t$ is ancestral, $B_{U_tV_t}=0$. 
Partitioning the LiNGAM equations according to $(U_t,V_t)$ gives
\begin{align*}
 x_{U_t} &= B_{U_tU_t}x_{U_t}+e_{U_t},\\
 x_{V_t} &= B_{V_tU_t}x_{U_t}
          +B_{V_tV_t}x_{V_t}+e_{V_t}.
\end{align*}
Since $I-B_{V_tV_t}$ is invertible,
\[
 x_{V_t}
 =\Theta^{(t)}x_{U_t}
 +(I-B_{V_tV_t})^{-1}e_{V_t}.
\]
Since $x_{U_t}$ is a function only of $e_{U_t}$ and $e_{V_t}$ is independent of $e_{U_t}$,
\[
 \Cov\!\left((I-B_{V_tV_t})^{-1}e_{V_t},x_{U_t}\right)=0.
\]
Hence $\Theta^{(t)}x_{U_t}$ is the linear projection of $x_{V_t}$ onto $x_{U_t}$, with residual $(I-B_{V_tV_t})^{-1}e_{V_t}$.
Successive DirectLiNGAM residualization removes orthogonal directions spanning $\operatorname{span}(x_{U_t})$, and therefore yields the same orthogonal-projection residual. 
By uniqueness of the orthogonal projection,
\[
 r_{V_t}^{(t)}
 =(I-B_{V_tV_t})^{-1}e_{V_t},
\]
which proves~\eqref{eq:reduced-regression} and~\eqref{eq:reduced-residual}.
\end{proof}

Lemma~\ref{lem:reduced-regression} shows that, after $t$ ordering steps, the residual of DirectLiNGAM for target $i$ can be obtained from a single regression of $x_i$ on $x_{U_t}$, with coefficient row $\Theta^{(t)}_{i,\cdot}$. 
Thus, identifying which variables in $U_t$ are needed for residual reconstruction reduces to determining which entries of $\Theta^{(t)}_{i,\cdot}$ are nonzero.

\begin{definition}[Active boundary]\label{def:active-boundary}
For a prefix set $U_t$ and a remaining node $i\in V_t$, define
\[
 \partial_i(U_t)
 :=\left\{j\in U_t:\ \exists\ j=v_0\to v_1\to\cdots\to v_m=i,\ m\ge1,
 \text{ with }v_1,\ldots,v_m\in V_t\right\}.
\]
Thus $\partial_i(U_t)$ contains the nodes in the prefix set whose effects enter the remaining set and can subsequently reach $i$ without passing through another node in the prefix set.
\end{definition}

By the path expansion of the reduced-form coefficient matrix,
\begin{equation}\label{eq:neumann}
\Theta^{(t)}
  =
 \left(\sum_{\ell=0}^{|V_t|-1}B_{V_tV_t}^{\ell}\right)B_{V_tU_t}.
\end{equation}
Hence each coefficient $\theta_{ij}^{(t)}$ is the sum of products of edge coefficients over directed paths from $j$ to $i$ whose nodes after $j$ remain in $V_t$. 
Therefore,
\begin{equation}\label{eq:support-inclusion}
 \operatorname{supp}(\Theta^{(t)}_{i,\cdot})
 \subseteq \partial_i(U_t).
\end{equation}
The inclusion can be strict if the contributions of multiple directed paths cancel exactly. 
We exclude this case by the following assumption.

\begin{assumption}[Non-cancellation condition]
\label{ass:non-cancellation}
For every ordering step $t$, every $i\in V_t$, and every $j\in\partial_i(U_t)$,
\[
 \theta_{ij}^{(t)}\neq0.
\]
\end{assumption}

This assumption is related to, but distinct from, the classical faithfulness condition \citep{spirtes2000causation}. 
In linear Gaussian structural equation models, violations of faithfulness can arise from exact cancellations among the contributions of multiple directed paths \citep{peters2012}. 
Such cancellations correspond to nontrivial algebraic constraints on the causal parameters and hence occur on a measure-zero subset of the parameter space under the usual absolute-continuity argument \citep{meek1995faithfulness,uhler2013geometry}.

Our assumption imposes only the localized non-cancellation condition needed for the reduced-form coefficients in~\eqref{eq:reduced-regression}: for each active-boundary pair $(j,i)$, the relevant directed-path contributions to $\theta_{ij}^{(t)}$ must not cancel exactly. 
It may be viewed as a step-wise, localized version of the \emph{path coefficients condition} of \citet{willem2020genetic}. 
Non-cancellation of directed-path contributions alone is not sufficient for faithfulness, since non-faithfulness can also arise from cancellation among treks, among other mechanisms \citep{willem2020genetic}.

\begin{remark}
By~\eqref{eq:neumann}, $\theta_{ij}^{(t)}$ is a sum of squarefree monomials in the nonzero entries of $B$, one for each directed path from $j$ to $i$ whose nodes after $j$ lie in $V_t$. 
Distinct directed paths have distinct edge sets and hence yield distinct monomials, so $\theta_{ij}^{(t)}$ is not the zero polynomial whenever $j\in\partial_i(U_t)$. 
Since a fixed DAG admits only finitely many prefix sets and finitely many pairs $(j,i)$, the union of the corresponding zero sets is Lebesgue-null. 
Hence Assumption~\ref{ass:non-cancellation} holds for Lebesgue-almost every choice of the nonzero edge coefficients.
\end{remark}

Combining the path-based inclusion in~\eqref{eq:support-inclusion} with Assumption~\ref{ass:non-cancellation} yields an exact characterization of the predictors needed for residual reconstruction. 
In particular, the active boundary coincides with the support of the corresponding regression coefficients.

\begin{proposition}[Characterization of the active boundary]
\label{prop:boundary-support}
Under Assumption~\ref{ass:non-cancellation}, for every prefix set $U_t$ and every remaining node $i\in V_t$,
\[
  \operatorname{supp}(\Theta^{(t)}_{i,\cdot})
  =\partial_i(U_t).
\]
Consequently,
\begin{equation}\label{eq:sparse-boundary-regression}
  x_i
  =\sum_{j\in\partial_i(U_t)}
    \theta_{ij}^{(t)}x_j+r_i^{(t)}.
\end{equation}
\end{proposition}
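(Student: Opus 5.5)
The equality will follow from two inclusions, and the display \eqref{eq:sparse-boundary-regression} will then follow from Lemma~\ref{lem:reduced-regression}. For the inclusion $\operatorname{supp}(\Theta^{(t)}_{i,\cdot})\subseteq\partial_i(U_t)$, I will make the path expansion \eqref{eq:neumann} precise. Since $B_{V_tV_t}$ is the coefficient matrix of the acyclic subgraph induced on $V_t$, it is nilpotent with $B_{V_tV_t}^{|V_t|}=0$. Hence the Neumann series $(I-B_{V_tV_t})^{-1}=\sum_{\ell=0}^{|V_t|-1}B_{V_tV_t}^{\ell}$ terminates.

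Expanding the $(i,j)$ entry of $B_{V_tV_t}^{\ell}B_{V_tU_t}$ gives
\[
\theta^{(t)}_{ij}=\sum_{\ell\ge0}\ \sum_{v_1,\dots,v_\ell\in V_t} b_{i v_\ell}b_{v_\ell v_{\ell-1}}\cdots b_{v_2 v_1}b_{v_1 j}.
\]
This is a sum over index sequences $j\to v_1\to\cdots\to v_\ell\to i$ with all intermediate nodes and $i$ in $V_t$. Only sequences in which every factor is nonzero contribute, and these are exactly directed paths of length $m=\ell+1\ge1$ from $j$ to $i$ whose nodes after $j$ lie in $V_t$. Acyclicity guarantees that such walks are paths. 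If $j\notin\partial_i(U_t)$, no such path exists, every term vanishes, and $\theta^{(t)}_{ij}=0$.

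The reverse inclusion $\partial_i(U_t)\subseteq\operatorname{supp}(\Theta^{(t)}_{i,\cdot})$ is exactly Assumption~\ref{ass:non-cancellation}: for $j\in\partial_i(U_t)$ we have $\theta^{(t)}_{ij}\neq0$. This requires only that $U_t$ be a prefix set of the fixed order, which is the setting of the assumption.

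For the consequence, Lemma~\ref{lem:reduced-regression} gives the row identity $x_i=\sum_{j\in U_t}\theta^{(t)}_{ij}x_j+r^{(t)}_i$. The terms with $j\in U_t\setminus\partial_i(U_t)$ vanish by the first inclusion, which yields \eqref{eq:sparse-boundary-regression}. I will also note that $r^{(t)}_i$ is uncorrelated with $x_{U_t}$, and in particular with $x_{\partial_i(U_t)}$. So $\theta^{(t)}_{i,\partial_i(U_t)}$ is also the population regression coefficient of $x_i$ on $x_{\partial_i(U_t)}$, and regressing on the active boundary alone reproduces the same residual. This coefficient is unique provided $\Cov(x_{U_t})$ is nonsingular, which holds because $x_{U_t}=(I-B_{U_tU_t})^{-1}e_{U_t}$ and the noise variances are positive.

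The main obstacle is the bookkeeping in the first inclusion: the entry-wise expansion must range only over intermediate nodes in $V_t$, since the restriction to $B_{V_tV_t}$ and $B_{V_tU_t}$ enforces this. One must also check that the final edge into $i$ and the first edge out of $j$ match the path definition with $m\ge1$, including the direct-edge case $m=1$ coming from $\ell=0$. The rest is immediate from earlier results.
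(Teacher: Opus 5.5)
Your proposal is correct and follows the paper's proof. The inclusion $\operatorname{supp}(\Theta^{(t)}_{i,\cdot})\subseteq\partial_i(U_t)$ comes from the path expansion \eqref{eq:neumann}, which you spell out entry-wise where the paper simply cites \eqref{eq:support-inclusion}; the reverse inclusion is Assumption~\ref{ass:non-cancellation}; and \eqref{eq:sparse-boundary-regression} follows by substituting into \eqref{eq:reduced-regression}. Your closing remarks on the regression of $x_i$ on $x_{\partial_i(U_t)}$ and its uniqueness are correct but not needed here, since they anticipate Proposition~\ref{prop:boundary-minimality}.
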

\begin{proof}
The inclusion~\eqref{eq:support-inclusion} gives one direction. 
Conversely, Assumption~\ref{ass:non-cancellation} gives $\theta_{ij}^{(t)}\neq0$ for every $j\in\partial_i(U_t)$, and hence $\partial_i(U_t)\subseteq\operatorname{supp}(\Theta^{(t)}_{i,\cdot})$. 
The support equality follows. 
Substituting it into~\eqref{eq:reduced-regression} gives the stated representation.
\end{proof}

Equation~\eqref{eq:sparse-boundary-regression} shows that regressing $x_i$ on $x_{\partial_i(U_t)}$ is sufficient to reconstruct the DirectLiNGAM residual $r_i^{(t)}$. 
The following proposition shows that this support is also minimal: it is the unique smallest support attaining the minimum risk.

\begin{proposition}[Minimality of the active boundary]
\label{prop:boundary-minimality}
Let $U_t$ be a prefix set, let $i\in V_t$, and suppose Assumption~\ref{ass:non-cancellation} holds. 
Then the least-squares coefficient of $x_i$ on $x_{U_t}$ is unique and equals $\Theta^{(t)}_{i,\cdot}$, whose support is $\partial_i(U_t)$.
Moreover, every subset $S\subseteq U_t$ whose best linear predictor attains the minimum squared-error risk satisfies $\partial_i(U_t)\subseteq S$.
Hence $\partial_i(U_t)$ is the unique smallest support attaining the minimum risk, and the corresponding residual is $r_i^{(t)}$.
\end{proposition}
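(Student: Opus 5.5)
The plan is to work entirely at the population level, using the covariance structure that Lemma~\ref{lem:reduced-regression} provides. First I establish uniqueness of the least-squares coefficient. This requires showing that $\Sigma_{U_tU_t}:=\Cov(x_{U_t})$ is positive definite. Since $U_t$ is ancestral, $x_{U_t}=(I-B_{U_tU_t})^{-1}e_{U_t}$, where $(I-B_{U_tU_t})^{-1}$ is invertible because $B_{U_tU_t}$ is nilpotent. The noises are independent with nonzero variances, so $\Cov(e_{U_t})$ is a positive diagonal matrix. Hence $\Sigma_{U_tU_t}=(I-B_{U_tU_t})^{-1}\Cov(e_{U_t})(I-B_{U_tU_t})^{-\top}\succ0$. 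The normal equations $\Sigma_{U_tU_t}\beta=\Cov(x_{U_t},x_i)$ then have a unique solution. By Lemma~\ref{lem:reduced-regression}, $x_i-\Theta^{(t)}_{i,\cdot}x_{U_t}=r_i^{(t)}$ is uncorrelated with $x_{U_t}$, so $\Theta^{(t)}_{i,\cdot}$ solves these equations and is therefore the unique coefficient. Its support equals $\partial_i(U_t)$ by Proposition~\ref{prop:boundary-support}.

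Next I handle the minimality claim. Let $\beta^\ast:=\Theta^{(t)}_{i,\cdot}$, and write the risk of an arbitrary coefficient vector $\beta\in\R^{|U_t|}$ as
\[
\E\bigl[(x_i-\beta^\top x_{U_t})^2\bigr]=\Var(r_i^{(t)})+(\beta-\beta^\ast)^\top\Sigma_{U_tU_t}(\beta-\beta^\ast).
\]
This identity holds because the cross term vanishes, $r_i^{(t)}$ being uncorrelated with $x_{U_t}$. Consequently the minimum risk over all of $U_t$ is $\Var(r_i^{(t)})$. Because $\Sigma_{U_tU_t}\succ0$, this minimum is attained only at $\beta=\beta^\ast$.

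Now take any $S\subseteq U_t$ whose best linear predictor attains this minimum. Its coefficient, extended by zeros outside $S$, is a vector $\beta$ supported in $S$ whose risk equals $\Var(r_i^{(t)})$. By the strict convexity just noted, $\beta=\beta^\ast$. Hence $\operatorname{supp}(\beta^\ast)=\partial_i(U_t)\subseteq S$.

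Conversely, $S=\partial_i(U_t)$ itself attains the minimum, as shown by \eqref{eq:sparse-boundary-regression}, and its residual is $r_i^{(t)}$. It is therefore the unique smallest such support: any other minimizing support contains it, so it cannot be equal in size without being identical. The residual of any minimizing $S$ is $x_i-\beta^{\ast\top}x_{U_t}=r_i^{(t)}$.

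The main obstacle is the positive definiteness of $\Sigma_{U_tU_t}$, since it drives both the uniqueness of the coefficient and the strict convexity used in the minimality argument. The nilpotency and noise-independence argument given in the first step is what secures it. The remaining subtlety is interpreting ``best linear predictor on $S$'' as the zero-padded minimizer over vectors supported in $S$. With that reading, the minimality argument goes through directly.
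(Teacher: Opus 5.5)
Your proposal is correct and follows essentially the same route as the paper: positive definiteness of $\Cov(x_{U_t})$, via $x_{U_t}=(I-B_{U_tU_t})^{-1}e_{U_t}$, gives a unique least-squares coefficient equal to $\Theta^{(t)}_{i,\cdot}$, and zero-padding the coefficient of any risk-minimizing $S$ forces it to equal that vector, so $\partial_i(U_t)\subseteq S$. Your risk decomposition $\Var(r_i^{(t)})+(\beta-\beta^\ast)^\top\Sigma_{U_tU_t}(\beta-\beta^\ast)$ just makes explicit the uniqueness-of-minimizer step that the paper states in one line.
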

\begin{proof}
Since
\[
x_{U_t}=(I-B_{U_tU_t})^{-1}e_{U_t}
\]
and the noise covariance is diagonal with strictly positive entries, $\Cov(x_{U_t})\succ0$. 
Hence the least-squares coefficient of $x_i$ on $x_{U_t}$ is unique. 
By Lemma~\ref{lem:reduced-regression}, it equals $\Theta^{(t)}_{i,\cdot}$ and has residual $r_i^{(t)}$; by Proposition~\ref{prop:boundary-support}, its support is $\partial_i(U_t)$.

Now let $S\subseteq U_t$ and suppose that its best linear predictor attains the same minimum risk. 
Extending its coefficient vector by zeros to $U_t$ gives a minimizer of the full regression on $x_{U_t}$. 
By uniqueness, this coefficient vector must equal $\Theta^{(t)}_{i,\cdot}$.
Therefore,
\[
\partial_i(U_t)
=\operatorname{supp}(\Theta^{(t)}_{i,\cdot})
\subseteq S.
\]
Since $\partial_i(U_t)$ itself attains the minimum risk by~\eqref{eq:sparse-boundary-regression}, it is the unique smallest support attaining that risk, with residual $r_i^{(t)}$.
\end{proof}

The residuals also retain the LiNGAM structure on the remaining variables. Indeed, by~\eqref{eq:reduced-regression},
\[
 r_{V_t}^{(t)}
 =B_{V_tV_t}r_{V_t}^{(t)}+e_{V_t}.
\]
Thus, reconstructing the same residuals preserves the LiNGAM model to which the next DirectLiNGAM ordering step is applied.

\begin{corollary}[Preservation of the ordering step]
\label{cor:ordering-preservation}
Let $U_t$ be a prefix set. 
If, for every $i\in V_t$, the residual is reconstructed by the regression of $x_i$ on $x_{\partial_i(U_t)}$, then the resulting residuals coincide with the residuals produced by DirectLiNGAM after $t$ ordering steps.
Consequently, the next DirectLiNGAM root-selection criterion yields the same scores, and hence the same set of minimizers.
\end{corollary}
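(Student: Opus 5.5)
The corollary combines Lemma~\ref{lem:reduced-regression} with Proposition~\ref{prop:boundary-minimality}, and I will argue at the population level, which is where those results live. The plan has two steps: first identify the reconstructed residuals with the DirectLiNGAM residuals $r_i^{(t)}$, then observe that the root-selection score depends on the data only through these residuals.

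\textbf{Step 1: the residuals coincide.} Fix $i\in V_t$ and write $S:=\partial_i(U_t)$. Let $\beta_S$ be the least-squares coefficient of $x_i$ on $x_S$.

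Since $\Cov(x_{U_t})\succ0$ (shown in the proof of Proposition~\ref{prop:boundary-minimality}), the submatrix $\Cov(x_S)$ is also positive definite, so $\beta_S$ is unique. By Proposition~\ref{prop:boundary-support}, equation~\eqref{eq:sparse-boundary-regression} gives
\[
x_i=\Theta^{(t)}_{i,S}x_S+r_i^{(t)}.
\]
By Lemma~\ref{lem:reduced-regression}, $r_i^{(t)}=\bigl((I-B_{V_tV_t})^{-1}e_{V_t}\bigr)_i$ is a function of $e_{V_t}$ only. It is therefore uncorrelated with $x_{U_t}$, and in particular with $x_S$.

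So $\Theta^{(t)}_{i,S}$ satisfies the normal equations for regressing $x_i$ on $x_S$. By uniqueness, $\beta_S=\Theta^{(t)}_{i,S}$, and the reconstructed residual $x_i-\beta_S^\top x_S$ equals $r_i^{(t)}$. Alternatively, one can cite directly the last sentence of Proposition~\ref{prop:boundary-minimality}, which states that the residual attached to the minimal support is $r_i^{(t)}$.

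Lemma~\ref{lem:reduced-regression} also identifies $r_i^{(t)}$ with the DirectLiNGAM residual after $t$ successive residualizations along the prefix $U_t$. Hence the whole vector of reconstructed residuals equals $r_{V_t}^{(t)}$, and this holds for every $i$ simultaneously.

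\textbf{Step 2: the scores coincide.} The next root-selection score, the population analogue of~\eqref{eq:direct-root-score}, is
\[
T(j)=\sum_{i\in V_t\setminus\{j\}}\mathcal D\bigl(r_j^{(t)},\,r_{i\mid j}\bigr),
\]
where $r_{i\mid j}$ is built from $r_i^{(t)}$ and $r_j^{(t)}$ through $\Cov$ and $\Var$. It is therefore a deterministic functional of the joint distribution of $(r_k^{(t)})_{k\in V_t}$ alone.

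Since the two constructions yield identical random vectors, they yield identical scores $T(j)$ for every $j\in V_t$, and hence the same $\arg\min$ set. I will also note that $r_{V_t}^{(t)}=B_{V_tV_t}r_{V_t}^{(t)}+e_{V_t}$ is again a LiNGAM model, so the criterion is well defined and retains its usual interpretation.

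\textbf{Main obstacle.} The mathematics is nearly immediate; the delicate point is scope. The statement must be read at the population level, or at the level of exact reconstruction: with finite samples, regressing on $X_S$ instead of sequential projections generally gives different vectors. I will make this explicit so that the claim ``same scores'' refers to the population criterion.

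A second point needs care. Identifying the reconstructed residual with the DirectLiNGAM residual requires that DirectLiNGAM's first $t$ selections form exactly the prefix $U_t$ of a causal order. This is the hypothesis ``Let $U_t$ be a prefix set,'' and I will invoke it where Lemma~\ref{lem:reduced-regression} is used.
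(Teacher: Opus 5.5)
Your proposal is correct and follows the paper's route. The paper gives no separate proof: it reads the corollary off the last sentence of Proposition~\ref{prop:boundary-minimality} (the active-boundary regression has residual $r_i^{(t)}$) together with Lemma~\ref{lem:reduced-regression}, and notes that $r_{V_t}^{(t)}=B_{V_tV_t}r_{V_t}^{(t)}+e_{V_t}$, so the score, being a functional of these residuals, is unchanged. One small refinement: your normal-equations step only needs the inclusion~\eqref{eq:support-inclusion}, not Assumption~\ref{ass:non-cancellation}, so citing that inclusion instead of Proposition~\ref{prop:boundary-support} makes the identification of residuals hold without the non-cancellation condition.
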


\begin{corollary}[Characterization for pruning]
\label{cor:pruning-parent-recovery}
Let $C_i$ satisfy $\Pa_i\subseteq C_i\subseteq\NonDesc_i$.
Then the least-squares coefficient of $x_i$ on $x_{C_i}$ is unique, has support $\Pa_i$, equals $b_{ij}$ for $j\in\Pa_i$, and has residual $e_i$. 
Moreover, every subset $S\subseteq C_i$ attaining the minimum squared-error risk satisfies $\Pa_i\subseteq S$.
Hence $\Pa_i$ is the unique smallest support attaining the minimum risk.
\end{corollary}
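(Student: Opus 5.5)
The plan is to write $x_i$ in terms of $x_{C_i}$ plus noise, check that the noise is uncorrelated with $x_{C_i}$, and then reuse the uniqueness argument from the proof of Proposition~\ref{prop:boundary-minimality}. No non-cancellation assumption will be needed, because the structural equation gives the representation directly. Starting from~\eqref{eq:lingam},
\[
x_i=\sum_{j\in\Pa_i} b_{ij}x_j+e_i .
\]
Since $\Pa_i\subseteq C_i$, this is a linear combination of $x_{C_i}$ plus $e_i$. Its coefficient vector is supported exactly on $\Pa_i$, because $b_{ij}\neq0$ precisely for $j\in\Pa_i$.

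The first step is to show $\Cov(e_i,x_{C_i})=0$. Solving $x=(I-B)^{-1}e$ along the DAG, each $x_j$ is a linear combination of $e_k$ for $k\in\An_j\cup\{j\}$. For $j\in\NonDesc_i$, the index $i$ is not in $\An_j\cup\{j\}$: otherwise $i\leadsto j$ or $j=i$, which contradicts $j\in\NonDesc_i$. So $x_{C_i}$ is a function of $\{e_k:k\neq i\}$ only. By mutual independence, $e_i$ is then uncorrelated with (indeed independent of) $x_{C_i}$. This is the one step that uses the hypothesis $C_i\subseteq\NonDesc_i$. Consequently the displayed representation is the orthogonal projection of $x_i$ onto $\operatorname{span}(x_{C_i})$, and its residual is $e_i$.

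The second step is uniqueness, which requires $\Cov(x_{C_i})\succ0$. As in Proposition~\ref{prop:boundary-minimality}, $x_V=(I-B)^{-1}e$ with $(I-B)^{-1}$ invertible and $\Cov(e)$ diagonal with positive entries. Hence $\Cov(x_V)\succ0$, and every principal submatrix, in particular $\Cov(x_{C_i})$, is positive definite. So the normal equations have a unique solution. That solution is the coefficient vector above: it has support $\Pa_i$, entries $b_{ij}$, and residual $e_i$.

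The final step is minimality. Take any $S\subseteq C_i$ whose best linear predictor attains the minimum risk $\Var(e_i)$. Extending its coefficients by zeros to $C_i$ gives a minimizer of the full regression. By uniqueness it equals the structural coefficient vector, so $\Pa_i\subseteq S$. Since $S=\Pa_i$ itself attains this risk, $\Pa_i$ is the unique smallest such support. I expect no real obstacle: the only points needing care are the non-descendant/independence argument and the positive-definiteness of the covariance restricted to $C_i$.
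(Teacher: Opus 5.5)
Your proposal is correct and follows the paper's proof essentially step for step: the independence of $e_i$ from $x_{C_i}$ (because $C_i$ contains no descendants of $i$) makes the structural equation the least-squares projection, $\Cov(x_{C_i})\succ0$ gives uniqueness with support $\Pa_i$, and the zero-extension argument from Proposition~\ref{prop:boundary-minimality} gives minimality. The only difference is that you spell out why $x_{C_i}$ depends only on $\{e_k:k\neq i\}$ and why $\Cov(x_{C_i})$ is positive definite, both of which the paper asserts briefly.
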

\begin{proof}
Since $C_i$ contains no descendants of $i$, the noise $e_i$ is independent of $x_{C_i}$. 
Thus the structural equation
\[
 x_i=\sum_{j\in\Pa_i}b_{ij}x_j+e_i
\]
gives the least-squares projection of $x_i$ onto $x_{C_i}$.
Because $\Cov(x_{C_i})\succ0$, its coefficient vector is unique and has support $\Pa_i$. 
The same uniqueness argument as in Proposition~\ref{prop:boundary-minimality} then implies that every risk-minimizing support $S\subseteq C_i$ contains $\Pa_i$.
\end{proof}

\subsection{Active-Boundary Sparsity and Sample Size}
\label{subsec:boundary-sparsity}

Section~\ref{subsec:theory} characterizes which variables in the prefix set are needed to reconstruct a DirectLiNGAM residual. 
This subsection studies the size of this target set and its relation to the sample size $n$.

Define the (maximum) active-boundary sparsity along $\pi$ by 
\begin{equation}\label{eq:boundary-sparsity}
 s_{\max}(\pi)
 :=\max_{0\le t<p}\max_{i\in V_t^{\pi}}
 |\partial_i(U_t^{\pi})|.
\end{equation}

By Proposition~\ref{prop:boundary-minimality}, the cardinality of the active boundary $|\partial_i(U_t^{\pi})|$ is the minimum support size required to reconstruct the DirectLiNGAM residual for target $i$ at the population level. 
Hence $s_{\max}(\pi)$ is the largest such support size encountered along $\pi$.

The contrast with cumulative residualization is already visible in the simplest possible graph.

\begin{example}[Directed chain]\label{ex:chain-boundary}
Let $G$ be the chain $1\to2\to\cdots\to p$ with $\pi(i)=i$. 
For $t\ge1$ and $i\in V_t$, a node $j\in U_t$ belongs to $\partial_i(U_t)$ only if the path from $j$ to $i$ enters $V_t$ immediately after leaving $j$. 
In the chain this occurs only for $j=t$.
Hence
\[
\partial_i(U_t)=\{t\}
\]
for every $i\in V_t$, and therefore
\[
s_{\max}(\pi)=1.
\]
Thus, at every noninitial ordering step, the minimum support required to reconstruct any remaining residual has size one, independently of $p$.
\end{example}

The active-boundary sparsity can, however, behave very differently from the maximum in-degree. 
We next consider the random DAG construction proposed by \citet{wang2020high}.

\begin{example}[Wang--Drton random DAG]\label{ex:wang-drton-boundary}
Consider the random DAG construction of \citet{wang2020high} with the fixed causal order $1,\ldots,p$ and in-degree bound $J$.
For each node $v=2,\ldots,p$, the edge $(v-1)\to v$ is always present, and the number of parents is sampled as
\[
 d_v\sim\operatorname{Unif}\{1,\ldots,\min(v-1,J)\}.
\]
When $d_v>1$, the remaining $d_v-1$ parents are selected uniformly without replacement from $\{1,\ldots,v-2\}$.

Consider a prefix set $U_t=\{1,\ldots,t\}$ and a remaining node $i>t$.
The mandatory edges form the chain
\[
 t\to t+1\to\cdots\to i,
\]
so $t\in\partial_i(U_t)$. 
For $t>J$ and $j<t$, the probability that $j$ is selected as an additional parent of a node $v$ is
\[
\frac{J-1}{2(v-2)}.
\]
Therefore,
\[
\mathbb{E}\!\left[|\partial_i(U_t)|\right]
=
1+(t-1)
\left[
1-
\prod_{v=t+1}^{i}
\left(
1-\frac{J-1}{2(v-2)}
\right)
\right].
\]
For the $J=3$ setting used in our experiments, this simplifies to
\[
\mathbb{E}\!\left[|\partial_i(U_t)|\right]
=
1+(t-1)\frac{i-t}{i-2}.
\]
Taking the final target $i=p$ and
\[
t=\left\lceil\frac{p+1}{2}\right\rceil,
\]
we obtain
\[
\mathbb{E}[s_{\max}(\pi)]
\ge
\mathbb{E}\!\left[|\partial_p(U_t)|\right]
=
1+
\frac{\lfloor (p-1)^2/4\rfloor}{p-2}
=
\frac{p}{4}+O(1).
\]
Thus, in the $J=3$ setting, although the maximum in-degree is bounded by three, the expected active-boundary sparsity grows linearly with $p$.
More generally, a bounded maximum in-degree does not by itself imply bounded active-boundary sparsity.
\end{example}

This example shows that the number of predictors required for the residual reconstruction can increase with $p$ even when the maximum in-degree is fixed.

We next examine whether residual reconstruction using the active boundary remains feasible in the finite-sample data matrix.
For $S\subseteq V$, let $P_S$ denote the orthogonal projector onto the subspace $\operatorname{span}(X_S)$ spanned by the columns of $X_S$. 
Along a causal order $\pi$, define the \emph{oracle active-boundary residual} by
\begin{equation}\label{eq:oracle-boundary-residual}
 \widetilde R_i^{(t)}
 :=
 X_i-
 P_{\partial_i(U_t^\pi)}X_i,
 \qquad
 t=0,\dots,p-2,\quad i\in V_t^\pi.
\end{equation}

Because the columns of $X$ are centered, at most $n-1$ of them can be linearly independent. 
Thus $|\partial_i(U_t^\pi)|\le n-2$ makes linear independence of
\[
 \{X_j:j\in\partial_i(U_t^\pi)\}\cup\{X_i\}
\]
possible, but does not by itself guarantee it. 
We therefore impose the following localized nondegeneracy condition.

\begin{assumption}[Sample nondegeneracy of active-boundary regressions]
\label{ass:sample-nondegeneracy}
For every ordering step $t=0,\ldots,p-2$ and every $i\in V_t^\pi$,
\[
 \operatorname{rank}
 X_{\partial_i(U_t^\pi)\cup\{i\}}
 =
 |\partial_i(U_t^\pi)|+1
\]
whenever $|\partial_i(U_t^\pi)|\le n-2$.
\end{assumption}

Assumption~\ref{ass:sample-nondegeneracy} excludes exact linear dependencies only among the columns involved in the oracle active-boundary regressions. 
The condition is local: it does not require the full data matrix $X$ to have maximal rank, nor does it impose linear independence on subsets that do not arise in these regressions.

\begin{proposition}[Finite-sample feasibility of oracle boundary reconstruction]\label{prop:oracle-boundary-feasibility}
Suppose the sample nondegeneracy condition of Assumption~\ref{ass:sample-nondegeneracy} holds along a causal order $\pi$. If
\[
s_{\max}(\pi)\le n-2,
\]
then for every ordering step $t=0,\ldots,p-2$ and every remaining node $i\in V_t^\pi$, the oracle active-boundary residual satisfies
\[
\widetilde R_i^{(t)}\neq\mathbf 0.
\]
\end{proposition}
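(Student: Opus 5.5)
The plan is to reduce the claim to elementary linear algebra for a single regression, then apply it uniformly over all pairs $(t,i)$. Fix an ordering step $t\in\{0,\dots,p-2\}$ and a remaining node $i\in V_t^\pi$, and write $S:=\partial_i(U_t^\pi)$. By the definition of $s_{\max}(\pi)$ in~\eqref{eq:boundary-sparsity}, we have $|S|\le s_{\max}(\pi)\le n-2$. This places the pair $(t,i)$ inside the scope of Assumption~\ref{ass:sample-nondegeneracy}, which gives
\[
\operatorname{rank}X_{S\cup\{i\}}=|S|+1 .
\]
That is, the columns $\{X_j:j\in S\}\cup\{X_i\}$ are linearly independent. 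Note that $i\notin S$, since $S\subseteq U_t^\pi$ and $i\in V_t^\pi$, so the index set $S\cup\{i\}$ has exactly $|S|+1$ elements.

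The key step is then a contrapositive argument. Suppose $\widetilde R_i^{(t)}=\mathbf 0$. By~\eqref{eq:oracle-boundary-residual}, this means $X_i=P_SX_i\in\operatorname{span}(X_S)$. Hence $X_i$ is a linear combination of the columns $X_j$, $j\in S$, and so $\operatorname{rank}X_{S\cup\{i\}}=\operatorname{rank}X_S\le|S|$. This contradicts the rank identity above.

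The case $S=\varnothing$, which occurs for example at $t=0$, has to be handled consistently. There $P_\varnothing=0$ and $\widetilde R_i^{(t)}=X_i$. The assumption gives $\operatorname{rank}X_{\{i\}}=1$, so $X_i\neq\mathbf 0$, and the same contradiction goes through with the convention $\operatorname{span}(\varnothing)=\{\mathbf 0\}$.

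Since $t$ and $i$ were arbitrary, the conclusion holds for every ordering step and every remaining node. I do not expect a genuine obstacle. The only points needing care are the ones above: confirming that the hypothesis $s_{\max}(\pi)\le n-2$ is exactly what activates the conditional clause of Assumption~\ref{ass:sample-nondegeneracy} for every pair, and treating the empty boundary correctly. I would also add a remark contrasting this with Proposition~\ref{prop:vanishing-residuals}. The bound $n-2$ leaves room for the $|S|+1$ columns inside the $(n-1)$-dimensional centered space $\mathcal H_0$, whereas cumulative residualization effectively projects onto all of $U_t$, whose size eventually exceeds the rank $r\le n-1$ of $X$.
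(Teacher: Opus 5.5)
Your proposal is correct and follows the paper's proof essentially step for step. You bound $|\partial_i(U_t^\pi)|$ by $s_{\max}(\pi)\le n-2$, invoke Assumption~\ref{ass:sample-nondegeneracy} to get linear independence of the columns indexed by $\partial_i(U_t^\pi)\cup\{i\}$, and conclude $X_i\notin\operatorname{span}(X_{\partial_i(U_t^\pi)})$. Your explicit checks that $i\notin\partial_i(U_t^\pi)$ and that the empty-boundary case works are sound, and they just spell out what the paper leaves implicit.
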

\begin{proof}
By~\eqref{eq:boundary-sparsity}, $|\partial_i(U_t^\pi)|\le s_{\max}(\pi)\le n-2$.
Hence Assumption~\ref{ass:sample-nondegeneracy} implies that the columns of $X_{\partial_i(U_t^\pi)\cup\{i\}}$ are linearly independent. 
Therefore
\[
 X_i\notin
 \operatorname{span}\{X_j:j\in\partial_i(U_t^\pi)\},
\]
so~\eqref{eq:oracle-boundary-residual} is nonzero.
\end{proof}

Proposition~\ref{prop:oracle-boundary-feasibility} applies regardless of the relative sizes of $n$ and $p$. 
Its main relevance here is the small-sample regime $n<p$, where it provides a finite-sample contrast to Proposition~\ref{prop:vanishing-residuals}. 
Under cumulative residualization, DirectLiNGAM is necessarily subject to deterministic vanishing, whereas oracle active-boundary reconstruction remains nondegenerate under $s_{\max}(\pi)\le n-2$ and Assumption~\ref{ass:sample-nondegeneracy}.

The key difference is that DirectLiNGAM applies one additional projection at each ordering step, whereas oracle active-boundary reconstruction returns to the original data and uses only $|\partial_i(U_t^\pi)|\le s_{\max}(\pi)$ predictors. 
Thus the number of predictors used for residual reconstruction is controlled by the graph rather than by the ordering step $t$. 
In particular, when $n<p$, the conditions of Proposition~\ref{prop:oracle-boundary-feasibility} allow the oracle residuals to remain nonzero even at ordering steps $t\ge n-1$, where cumulative residualization cannot continue with nonzero projection directions. 
For example, in the directed chain of Example~\ref{ex:chain-boundary}, $s_{\max}(\pi)=1$ independently of $p$.

\begin{remark}[Scope of Proposition~\ref{prop:oracle-boundary-feasibility}]
\label{rem:scope-of-claim}
Proposition~\ref{prop:oracle-boundary-feasibility} guarantees only that the oracle active-boundary residuals remain nonzero. 
It does not guarantee recovery of the active boundary. 
\end{remark}

Since $s_{\max}(\pi)$ is a graph-dependent quantity, it is useful to relate it to simpler structural quantities. 
For any node $i$, consider the prefix set immediately preceding $i$, namely $U_{\pi(i)-1}$. 
At this point all ancestors of $i$ are contained in the prefix set, and
\[
 \partial_i(U_{\pi(i)-1})=\Pa_i.
\]
Indeed, each parent reaches $i$ directly, whereas any path from a non-parent ancestor contains an intermediate ancestor already in the prefix set.
Therefore,
\[
 s_{\max}(\pi)\ge\max_{i\in V}|\Pa_i|.
\]

For an upper bound, we introduce a set that depends only on the prefix set and contains the active boundaries of all remaining nodes.

\begin{definition}[Outgoing frontier]\label{def:outgoing-frontier}
For a subset $U\subseteq V$, define its \emph{outgoing frontier} by
\[
 F(U)
 :=
 \bigl\{j\in U:\Ch_j\cap(V\setminus U)\neq\varnothing\bigr\}.
\]
\end{definition}

Thus $F(U)$ consists of the nodes in $U$ having at least one child outside $U$. 
Unlike the active boundary, it does not depend on a target node. 
Along a causal order, it is exactly the union of the active boundaries of the remaining nodes.

\begin{lemma}[Outgoing frontier as the union of active boundaries]\label{lem:frontier-union}
For every causal order $\pi$ and every step $t=0,\ldots,p-1$,
\[
 \bigcup_{i\in V_t^{\pi}}\partial_i(U_t^{\pi})=F(U_t^{\pi}).
\]
In particular, $\partial_i(U_t^{\pi})\subseteq F(U_t^{\pi})$, and hence $|\partial_i(U_t^{\pi})|\le|F(U_t^{\pi})|$, for every $i\in V_t^{\pi}$.
\end{lemma}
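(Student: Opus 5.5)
We prove the set equality by double inclusion, working purely at the level of the graph; the non-cancellation condition and the reduced-form coefficients play no role. Throughout, write $U=U_t^\pi$ and $V_t=V_t^\pi$. The only structural fact about the prefix set we need is the one recorded before Lemma~\ref{lem:reduced-regression}: $U$ is a subset of $V$ and $V_t=V\setminus U$.

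For the inclusion $\bigcup_{i\in V_t}\partial_i(U)\subseteq F(U)$, take $i\in V_t$ and $j\in\partial_i(U)$. By Definition~\ref{def:active-boundary}, there is a directed path $j=v_0\to v_1\to\cdots\to v_m=i$ with $m\ge1$ and $v_1,\ldots,v_m\in V_t$. Its first edge $j\to v_1$ gives $v_1\in\Ch_j\cap(V\setminus U)$. Since also $j\in U$, we get $j\in F(U)$ by Definition~\ref{def:outgoing-frontier}.

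For the reverse inclusion, take $j\in F(U)$. Then there is a child $v\in\Ch_j$ with $v\notin U$, that is, $v\in V_t$. Setting $i:=v$, the one-edge path $j\to i$ (the case $m=1$) has all nodes after $j$ in $V_t$. Hence $j\in\partial_i(U)$ with $i\in V_t$, so $j$ lies in the union.

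The consequences stated in the lemma follow at once. Each $\partial_i(U)$ is one of the sets in the union, so $\partial_i(U)\subseteq F(U)$ and therefore $|\partial_i(U)|\le|F(U)|$. At $t=p-1$ the remaining set is a single node, and the argument still applies verbatim.

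There is no substantive obstacle. The only point needing care is the reverse inclusion, where one must choose the target $i$ to be the child itself rather than an arbitrary remaining node; the definition only requires the existence of some $i\in V_t$, so this choice is legitimate. It is also worth noting that the causal-order property of $\pi$ is never used, so the equality holds for any subset $U$ in place of $U_t^\pi$.
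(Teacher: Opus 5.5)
Your proof is correct and follows the paper's own argument: the first edge $j\to v_1$ of a boundary path witnesses $j\in F(U_t^\pi)$, and conversely a child $k\in\Ch_j\cap V_t^\pi$ gives the single-edge path showing $j\in\partial_k(U_t^\pi)$. Your added observation is also right: the causal-order property of $\pi$ is never used, so the equality holds for any subset $U\subseteq V$.
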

\begin{proof}
Let $j\in\partial_i(U_t^\pi)$ for some $i\in V_t^\pi$. 
By Definition~\ref{def:active-boundary}, there is a directed path $j=v_0\to v_1\to\cdots\to v_m=i$ with $v_1,\ldots,v_m\in V_t^\pi$. 
Hence $v_1\in\Ch_j\cap V_t^\pi$, so $j\in F(U_t^\pi)$. 
Conversely, let $j\in F(U_t^\pi)$. Then some $k\in\Ch_j\cap V_t^\pi$ exists, and the single-edge path $j\to k$ shows that $j\in\partial_k(U_t^\pi)$.
\end{proof}

Taking maxima in the inclusion $\partial_i(U_t^\pi)\subseteq F(U_t^\pi)$ and combining it with the lower bound above gives
\begin{equation}\label{eq:boundary-inout-bounds}
 \max_{i\in V}|\Pa_i|
 \;\le\;
 s_{\max}(\pi)
 \;\le\;
 \max_{0\le t<p}|F(U_t^\pi)|.
\end{equation}

These bounds give simple graph-level conditions for the sufficient condition $s_{\max}(\pi)\le n-2$. 
If some node has in-degree at least $n-1$, then the condition cannot hold for any causal order. 
Conversely, if $|F(U_t^\pi)|\le n-2$ for every $t$, then $s_{\max}(\pi)\le n-2$, and Proposition~\ref{prop:oracle-boundary-feasibility} applies. 
An in-degree of at least $n-1$ therefore rules out this sufficient condition, but does not by itself imply that an oracle residual must vanish.
The following examples illustrate these two cases.
\begin{example}[Hub]\label{ex:hub-boundary}
Let $G$ have edges $j\to p$ for $j=1,\dots,p-1$, so that node $p$ has in-degree $p-1$ and all other nodes are roots. 
For any causal order, $\pi(p)=p$, and
\[
 \partial_p(U_{p-1}^{\pi})=\Pa_p=\{1,\dots,p-1\}.
\]
Hence
\[
 s_{\max}(\pi)=p-1
\]
for every causal order. 
Thus the active-boundary sparsity grows with $p$ and cannot be reduced by reordering. 
In particular, $s_{\max}(\pi)\le n-2$ fails throughout the regime $n<p$.
Proposition~\ref{prop:oracle-boundary-feasibility} is therefore inapplicable in this example because its sufficient condition is not satisfied. 
This does not imply, however, that a particular oracle residual must vanish.
\end{example}

\begin{example}[Layered graph of bounded width]\label{ex:layered-boundary}
Partition $V$ into layers $L_1,\dots,L_m$ with $|L_k|\le w$ for all $k$, with the convention $L_0=\varnothing$, and suppose every edge joins consecutive layers, that is, $j\to i$ implies $j\in L_k$ and $i\in L_{k+1}$ for some $k$. 
Let $\pi$ order the layers consecutively. 
Fix a step $t<p$ and let $L_k$ be the first layer not fully contained in $U_t$. 
Then
\begin{equation*}
 U_t=L_1\cup\cdots\cup L_{k-1}\cup A,
 \qquad A\subsetneq L_k,
\end{equation*}
so $|A|\le w-1$. 
A node in $L_{k'}$ with $k'\le k-2$ has all of its children in $L_{k'+1}\subseteq U_t$ and hence does not belong to the outgoing frontier of Definition~\ref{def:outgoing-frontier}, so $F(U_t)\subseteq A\cup L_{k-1}$ and $|F(U_t)|\le(w-1)+w=2w-1$. 
By~\eqref{eq:boundary-inout-bounds},
\begin{equation*}
 s_{\max}(\pi)\le 2w-1,
\end{equation*}
independently of the number of layers and hence of $p$. 
Consequently, if $2w-1\le n-2$ and Assumption~\ref{ass:sample-nondegeneracy} holds, Proposition~\ref{prop:oracle-boundary-feasibility} applies independently of the number of layers and hence of $p$.
The directed chain of Example~\ref{ex:chain-boundary} is a special case with $w=1$, for which the bound is exact. 
Thus the graph may be arbitrarily deep, and $p$ arbitrarily large relative to $n$, provided its layer width remains bounded relative to $n$.
\end{example}

Taken together, these results show that finite-sample feasibility of oracle residual reconstruction can be controlled by the active-boundary sparsity relative to the sample size, rather than by the total number of variables.
Under Assumption~\ref{ass:sample-nondegeneracy},
\[
 s_{\max}(\pi)\le n-2
\]
is sufficient for all oracle active-boundary residuals used during causal ordering to remain nonzero. 
As the hub and bounded-width examples illustrate, this condition reflects structural complexity along the causal order and can either grow with $p$ or remain bounded independently of $p$.

\subsection{Algorithm and Implementation}
\label{subsec:algo}

Section~\ref{subsec:theory} characterizes the predictor sets required for residual reconstruction, while Section~\ref{subsec:boundary-sparsity} analyzes their size and finite-sample feasibility. 
In particular, the sparsity bound $s_{\max}(\pi)\le n-2$ gives a sufficient condition for the corresponding oracle residuals to remain nonzero under Assumption~\ref{ass:sample-nondegeneracy}. 
In observed data, however, the required predictor sets are unknown, so AdaPS-LiNGAM adaptively estimates them using best-subset selection.

\subsubsection{Adaptive best subset selection}
For a response vector $y\in\R^n$, a candidate index set $C\subseteq V$, the associated design matrix
\[
 W_C:=(W_j)_{j\in C}\in\R^{n\times|C|},
\]
and a prescribed cardinality $k\in\{0,\dots,|C|\}$, best subset selection (BSS) solves
\begin{equation}\label{eq:sample-bss}
 \widehat\beta(k)
 \in \arg\min_{\beta\in\R^{|C|}}
 \frac{1}{n}\Bigl\|y-\sum_{j\in C}\beta_j W_j\Bigr\|_2^2
 \quad\text{subject to}\quad
 \|\beta\|_0\le k,
\end{equation}
where $\beta=(\beta_j)_{j\in C}$ and $\|\beta\|_0:=|\{j\in C:\beta_j\neq0\}|$.


Applying BSS in practice raises two issues. 
First, solving~\eqref{eq:sample-bss} is a combinatorial optimization problem and can be computationally demanding when the candidate set is large. 
Second, the appropriate cardinality $k$ is generally unknown and needs to be specified or selected in some way. 
We refer to BSS together with selection of the cardinality $k$ as adaptive BSS.

To solve adaptive BSS, we use \texttt{abess}~\citep{zhu2020abess}, which uses a splicing-based algorithm for best-subset selection and selects the subset size using an information criterion. 
We write $\widehat\beta$ for the resulting coefficient vector and
\[
 \widehat S:=\{j\in C:\widehat\beta_j\neq0\}\subseteq C
\]
for its support.

\subsubsection{Use in causal ordering and pruning}
Let $X=(X_1,\dots,X_p)\in\R^{n\times p}$ denote the centered data matrix. 
We use the same sample iteration notation as in Section~\ref{subsec:directlingam}: $\widehat U_t$ contains the variables selected in the first $t$ positions and $\widehat V_t:=V\setminus\widehat U_t$ contains the variables that remain to be ordered. 
At ordering step $t$, AdaPS-LiNGAM returns to the original data and, for every $i\in\widehat V_t$, uses $X_{\widehat U_t}$ as the candidate design in adaptive BSS. 
The selected subset is used to construct a sample residual vector $\widehat R_i^{(t)}$, and the next root is chosen by minimizing $T_n(j;\widehat R^{(t)})$ over the residual data matrix $\widehat R^{(t)}:=(\widehat R_i^{(t)})_{i\in\widehat V_t}$. 
Proposition~\ref{prop:boundary-support} shows that the DirectLiNGAM residual can be reconstructed using only a subset of the prefix set, while Corollary~\ref{cor:ordering-preservation} shows that this reconstruction preserves the next DirectLiNGAM root-selection step. 
Further, the sparsity analysis in Section~\ref{subsec:boundary-sparsity} shows when such a reconstruction can remain nondegenerate in finite samples. 
Accordingly, adaptive BSS is used to retain only a subset of $\widehat U_t$, rather than residualizing against all of $\widehat U_t$.

After an estimated causal order $\widehat\pi:V\to V$ has been obtained, define the predecessor set of variable $i$ by
\[
 \widehat C_i:=\{j\in V:\widehat\pi(j)<\widehat\pi(i)\}.
\]
AdaPS-LiNGAM applies the same adaptive BSS procedure to pruning using $X_{\widehat C_i}$ as the candidate design. 
If the estimated order is consistent with the causal structure, $\Pa_i\subseteq \widehat C_i\subseteq\NonDesc_i$, and Corollary~\ref{cor:pruning-parent-recovery} shows that the corresponding regression coefficient has support $\Pa_i$.

Algorithm~\ref{alg:adaps} summarizes the full sample-level procedure. 
Here $\operatorname{AdaBSS}(y,W_C)$ denotes the adaptive BSS, 
returning a selected support $\widehat S\subseteq C$ and the corresponding coefficient vector $\widehat\beta\in\R^{|C|}$.

\begin{algorithm}[t]
\caption{AdaPS-LiNGAM}\label{alg:adaps}
\begin{algorithmic}[1]
\Require Centered data matrix $X\in\R^{n\times p}$
\State $\widehat U_0\gets\varnothing$ and $\widehat V_0\gets V$
\For{$t=0,\dots,p-2$}
  \For{each $i\in\widehat V_t$}
    \If{$\widehat U_t=\varnothing$}
      \State $\widehat R_i^{(t)}\gets X_i$
    \Else
      \State $(\widehat S_i^{(t)},\widehat\beta_i^{(t)})\gets\operatorname{AdaBSS}(X_i,X_{\widehat U_t})$
      \State $\widehat R_i^{(t)}\gets X_i-\sum_{j\in\widehat S_i^{(t)}}\widehat\beta_{i,j}^{(t)}X_j$
    \EndIf
  \EndFor
  \State $\widehat R^{(t)}\gets (\widehat R_i^{(t)})_{i\in\widehat V_t}$
  \State choose $\widehat j_t\in\arg\min_{j\in\widehat V_t} T_n(j;\widehat R^{(t)})$
  \State $\widehat\pi(\widehat j_t)\gets t+1$
  \State $\widehat U_{t+1}\gets \widehat U_t\cup\{\widehat j_t\}$ and $\widehat V_{t+1}\gets \widehat V_t\setminus\{\widehat j_t\}$
\EndFor
\State assign $\widehat\pi(j)\gets p$ for the unique $j\in\widehat V_{p-1}$
\State $\widehat B\gets 0_{p\times p}$
\For{each $i\in V$}
  \State $\widehat C_i\gets\{j\in V:\widehat\pi(j)<\widehat\pi(i)\}$
  \If{$\widehat C_i\neq\varnothing$}
    \State $(\widehat S_i,\widehat\beta_i)\gets\operatorname{AdaBSS}(X_i,X_{\widehat C_i})$
    \State set $\widehat b_{ij}\gets\widehat\beta_{i,j}$ for $j\in\widehat S_i$
  \EndIf
\EndFor
\State \Return $\widehat\pi,\widehat B$
\end{algorithmic}
\end{algorithm}

Comparing Algorithms~\ref{alg:directlingam} and~\ref{alg:adaps}, the sample root-selection criterion is identical. 
The difference during causal ordering is how the residual data matrices supplied to that criterion are constructed: DirectLiNGAM carries $R_i^{(t)}$ forward through repeated one-variable updates, whereas AdaPS-LiNGAM constructs $\widehat R_i^{(t)}$ directly from the original $X_i$ and a selected subset of $X_{\widehat U_t}$ at every step.

\section{Experiments}\label{sec:experiments}
We evaluate AdaPS-LiNGAM through two complementary experiments. 
The first examines how the methods behave as the number of variables increases while the sample size is fixed, with particular emphasis on the $n<p$ setting that motivates this work. 
The second fixes the number of variables and varies the sample size to assess how gracefully performance deteriorates as fewer observations become available.

\subsection{Experimental Setup}\label{subsec:setup-exp}
\subsubsection*{Data generation}
Synthetic data were generated from random LiNGAM models on four graph families.

For the Erd\H{o}s--R\'enyi DAGs, we first sampled a random topological order and then added each forward edge independently with probability $k/(p-1)$, where $k\in\{2,4,8\}$ controls the target average degree. 
We refer to these graph families as ER2, ER4, and ER8.

In addition, we considered the random DAG construction of \citet{wang2020high}. 
For each node $v=2,\ldots,p$, the number of parents $d_v$ was sampled uniformly from
\[
d_v \sim \operatorname{Unif}\left\{1,\ldots,\min(v-1,J)\right\},
\]
where $J$ is the specified in-degree bound. 
The edge $(v-1,v)$ was always included, and, when $d_v>1$, the remaining $d_v-1$ parents were selected uniformly without replacement from $\{1,\ldots,v-2\}$. 
Thus, the resulting DAG had maximum in-degree at most $J$ and was consistent with the fixed ordering. 
In our experiments, we set $J=3$ and refer to this graph family as \texttt{RG-indeg3}.

For all four graph families, the graph structure was generated independently of the subsequent SEM parameter generation. 
Nonzero edge coefficients were sampled with independent random signs and magnitudes drawn uniformly from $[0.5,2.0]$. 
The exogenous noises were mutually independent and either uniform or Laplace distributed. 
In the uniform setting, each noise term followed $\mathrm{Unif}(-\sqrt{3},\sqrt{3})$, giving unit variance, and in the Laplace setting, each noise term was centered Laplace with scale parameter $1$. 
After sampling, each observed variable was standardized, and the coefficient matrix was rescaled accordingly so that it remained consistent with the standardized samples.

\subsubsection*{Methods}
We compared our proposed method, AdaPS-LiNGAM, with two existing methods: DirectLiNGAM \citep{shimizu2011directlingam} and HighDimDirectLiNGAM \citep{wang2020high}. 

DirectLiNGAM used the pairwise likelihood score of \citet{hyvarinen2013pairwise}, with its ordering step implemented using the Python package \texttt{lingam}.
For final edge estimation, we replaced the package's default pruning routine with an adaptive-Lasso implementation based on \texttt{Ridge} and \texttt{LassoLarsIC} from \texttt{scikit-learn}.
Each variable was regressed on its predecessors in the estimated order, and adaptive weights were constructed from a ridge pilot estimate.
\texttt{LassoLarsIC} was then run with the BIC criterion, and the selected coefficients were refit by ridge regression on the selected support.
The same implementation was used in all reported settings to keep the DirectLiNGAM baseline consistent.
When $n\ge p$, \texttt{LassoLarsIC} was used without an explicit noise-variance specification.
When $n<p$, the same procedure was used except that the noise variance supplied to \texttt{LassoLarsIC} was fixed at $1$ to satisfy the software requirements for underdetermined regressions.

HighDimDirectLiNGAM is a LiNGAM-based method designed for high-dimensional settings and exploits sparsity through an assumed bound on the maximum in-degree \citep{wang2020high}. We included it as a baseline because it provides an existing approach specifically intended for causal discovery when the number of variables is large relative to the sample size.
For implementation, we used the \texttt{HighDimDirectLiNGAM} class provided by the Python package \texttt{lingam}. The package default parameters $J=3$, $K=4$, and $\alpha=0.5$ were adopted, where $J$ is the assumed largest in-degree, $K$ is the degree of the moment used to capture non-Gaussianity, and $\alpha$ controls pruning of false parents. 
We fixed $J=3$ across all experiments rather than tuning it to the true maximum in-degree of each generated graph, since the latter is generally unknown in practical applications and would constitute oracle information in our simulations. This common setting also keeps the computational cost manageable, as larger values of $J$ substantially increase the cost of the method. To ensure that the evaluation also includes a setting consistent with this assumed in-degree bound, we included \texttt{RG-indeg3}, whose maximum in-degree is bounded by three. In contrast, the maximum in-degree of the ER graph families can exceed three, particularly for denser graphs.

We implemented AdaPS-LiNGAM in Python. Residual reconstruction during causal ordering and final pruning used \texttt{abess} (version 0.4.11) \citep{zhu2022abess}, specifically \texttt{abess.linear.LinearRegression}, which implements the ABESS splicing algorithm of \citet{zhu2020abess}. We set \texttt{max\_parents="auto"}, so the candidate support sizes were determined by the package's default sequential search. When the candidate set has size $d$, this search explores sparse support sizes up to order $n/(\log\log n\,\log d)$. For support-size selection, we considered two information criteria: the generalized information criterion (GIC), corresponding to the SIC-type criterion discussed in \citet{zhu2020abess}, and the extended Bayesian information criterion (EBIC) \citep{chen2008ebic}. These choices define the GIC and EBIC variants of AdaPS-LiNGAM reported below.

For root selection, we used the same pairwise likelihood criterion as in DirectLiNGAM (\texttt{measure="pwling"}). No prior knowledge was supplied in any experiment.

\subsubsection*{Evaluation metrics}
Let $A,\widehat A\in\{0,1\}^{p\times p}$ denote the binarized true and estimated adjacency matrices in child-parent orientation, so that $A_{ij}=1$ if and only if $x_j$ is a parent of $x_i$. We evaluated directed edge recovery by precision, recall, and F1 score:
\begin{align*}
 \operatorname{Precision}(A,\widehat A)
 &:=\frac{\sum_{i,j} A_{ij}\widehat A_{ij}}{\sum_{i,j}\widehat A_{ij}},\\
 \operatorname{Recall}(A,\widehat A)
 &:=\frac{\sum_{i,j} A_{ij}\widehat A_{ij}}{\sum_{i,j} A_{ij}},\\
 \operatorname{F1}(A,\widehat A)
 &:=\frac{2\,\operatorname{Precision}(A,\widehat A)\operatorname{Recall}(A,\widehat A)}{\operatorname{Precision}(A,\widehat A)+\operatorname{Recall}(A,\widehat A)}.
\end{align*}
Precision quantifies how many estimated directed edges are correct, recall quantifies how many true directed edges are recovered, and F1 score summarizes their balance. 
To evaluate the estimated causal order $\widehat\pi$, we used the Topological Order Divergence (TOD) \citep{rolland2022score}:
\begin{equation*}
 \operatorname{TOD}(A,\widehat\pi)
 :=\sum_{i\neq j} A_{ij}\,\mathbf 1\!\left\{\widehat\pi(i)<\widehat\pi(j)\right\},
\end{equation*}
which counts true parent-child relations whose child is placed before its parent in the estimated order. 
Hence smaller TOD indicates a more reliable causal ordering. 
For graph-level discrepancy, we used the directed Structural Hamming Distance (SHD) \citep{tsamardinos2006maxmin}:
\begin{equation*}
 \operatorname{SHD}(A,\widehat A)
 :=\sum_{i,j} A_{ij}(1-\widehat A_{ij})
 +\sum_{i,j}(1-A_{ij})\widehat A_{ij}
 -\sum_{i,j} A_{ij}\widehat A_{ji}.
\end{equation*}
This quantity combines missing edges, extra edges, and reversed edges into a single directed graph error measure, so smaller SHD is better. 
We also recorded the wall-clock runtime of one call to the fitting routine.

\subsubsection*{Experimental environment}
All experiments were performed on a Linux server running Ubuntu 24.04.3 LTS, using Python (v3.10.19), with figures prepared using Matplotlib (v3.10.9). Each experiment was repeated over $100$ random seeds and parallelized over $50$ workers, while the internal numerical libraries were restricted to single-thread execution. The reported experiments were carried out on a server with $56$ physical cores ($112$ logical cores) and $755$~GiB RAM.

\subsection{Small-sample Settings}\label{subsec:small-sample}
We fix the sample size at $n=100$ and increase the number of variables from $p=10$ to $200$, thereby moving from $n\ge p$ to $n<p$. 
The four graph families, ER2, ER4, ER8, and \texttt{RG-indeg3}, are considered under both uniform and Laplace noise. 
The results are shown in Figures~\ref{fig:ex1-uniform} and~\ref{fig:ex1-laplace}.

As the dimensionality approaches and then exceeds the sample size, AdaPS-LiNGAM (EBIC) maintains a clear advantage over DirectLiNGAM in both causal order and edge recovery. 
The difference is particularly pronounced at $p=200$: under uniform noise, the F1 score of AdaPS-LiNGAM (EBIC) is $0.905$, $0.785$, and $0.557$ for ER2, ER4, and ER8, respectively, compared with $0.315$, $0.319$, and $0.263$ for DirectLiNGAM. 
The corresponding TOD values are also substantially smaller for AdaPS-LiNGAM. 
On ER2, for instance, TOD is $11.14$ versus $77.00$, accompanied by an SHD of $34.39$ versus $430.20$. 
The consistent improvement in both measures suggests that the benefit of the proposed residual construction extends beyond the final edge selection and affects the estimated causal ordering itself.

Graph density nevertheless remains an important source of difficulty.
Performance declines from ER2 to ER8 for every method, reflecting the greater complexity of denser graphs. 
AdaPS-LiNGAM (EBIC) is comparatively more robust to this increase in density: its F1 score at $p=200$ under uniform noise falls from $0.905$ on ER2 to $0.557$ on ER8, while DirectLiNGAM remains low, changing only from $0.315$ to $0.263$. 

The \texttt{RG-indeg3} graphs provide a useful complementary case. 
Although their maximum in-degree is bounded by three, they are considerably harder than the ER2 and ER4 graphs; at $p=200$, AdaPS-LiNGAM (EBIC) achieves an F1 score of $0.445$. As shown in Example~\ref{ex:wang-drton-boundary}, however, the Wang--Drton construction can have active-boundary sparsity that grows linearly with $p$ even when the maximum in-degree is bounded by three. 
This provides a structural explanation for why maximum in-degree alone does not capture the difficulty of the residual reconstruction problem. 
Additional analyses of active-boundary sparsity are provided in Appendix~\ref{app:active-boundary}.

The choice of information criterion mainly changes the balance between precision and recall. GIC tends to favor larger supports, yielding relatively high recall at the expense of additional false positives, whereas EBIC produces a sparser solution with a better overall precision--recall balance. 
For example, on ER2 at $p=200$ under uniform noise, GIC gives precision $0.401$ and recall $0.857$, compared with $0.887$ and $0.923$ for EBIC. 
The resulting F1 scores are $0.546$ and $0.905$, respectively.

HighDimDirectLiNGAM exhibits substantially weaker edge-recovery performance in this experiment, despite its much lower computational cost. 
The high-dimensional procedure of \citet{wang2020high} was developed for settings in which the number of variables is large relative to the sample size, but their high-dimensional simulation uses $n=3p/4$. 
In our experiment, the ratio $n/p$ decreases from one to $0.5$ as $p$ increases to $200$, making the problem increasingly sample-limited. 
Together with the fixed choice $J=3$, which may not cover the maximum in-degree of the ER graphs, this provides a more demanding setting for HighDimDirectLiNGAM.

The resulting edge-recovery performance is particularly poor at large $p$. 
Under uniform noise on ER2 at $p=200$, for example, its precision is only $0.035$ and its recall is $0.687$, yielding an F1 score of $0.066$. 
Its SHD also grows rapidly with dimensionality; on ER8 at $p=200$, the underlying SHD exceeds $2000$. 
These results indicate that the computationally efficient high-dimensional procedure does not maintain accurate edge recovery under the increasingly sample-limited settings considered here.

The same overall ordering of the methods is observed under Laplace noise, although the task is generally more difficult. 
For example, at $p=200$ on ER2, the F1 score of AdaPS-LiNGAM (EBIC) is $0.679$ under Laplace noise, compared with $0.905$ under uniform noise. 
The advantage of EBIC-based AdaPS-LiNGAM over DirectLiNGAM nonetheless remains substantial, indicating that the main qualitative finding is not specific to the uniform-noise setting.

The computational comparison is more nuanced. 
At $p=200$, AdaPS-LiNGAM requires roughly $576$--$579$ seconds per run, compared with about $685$ seconds for DirectLiNGAM, while HighDimDirectLiNGAM takes only about $40$ seconds. 
Thus, under the fixed-$n$ setting considered here, AdaPS-LiNGAM combines substantially better statistical performance with a runtime comparable to DirectLiNGAM, although it does not match the speed of HighDimDirectLiNGAM.

\begin{figure}[!htbp]
  \centering
  \includegraphics[width=\textwidth]{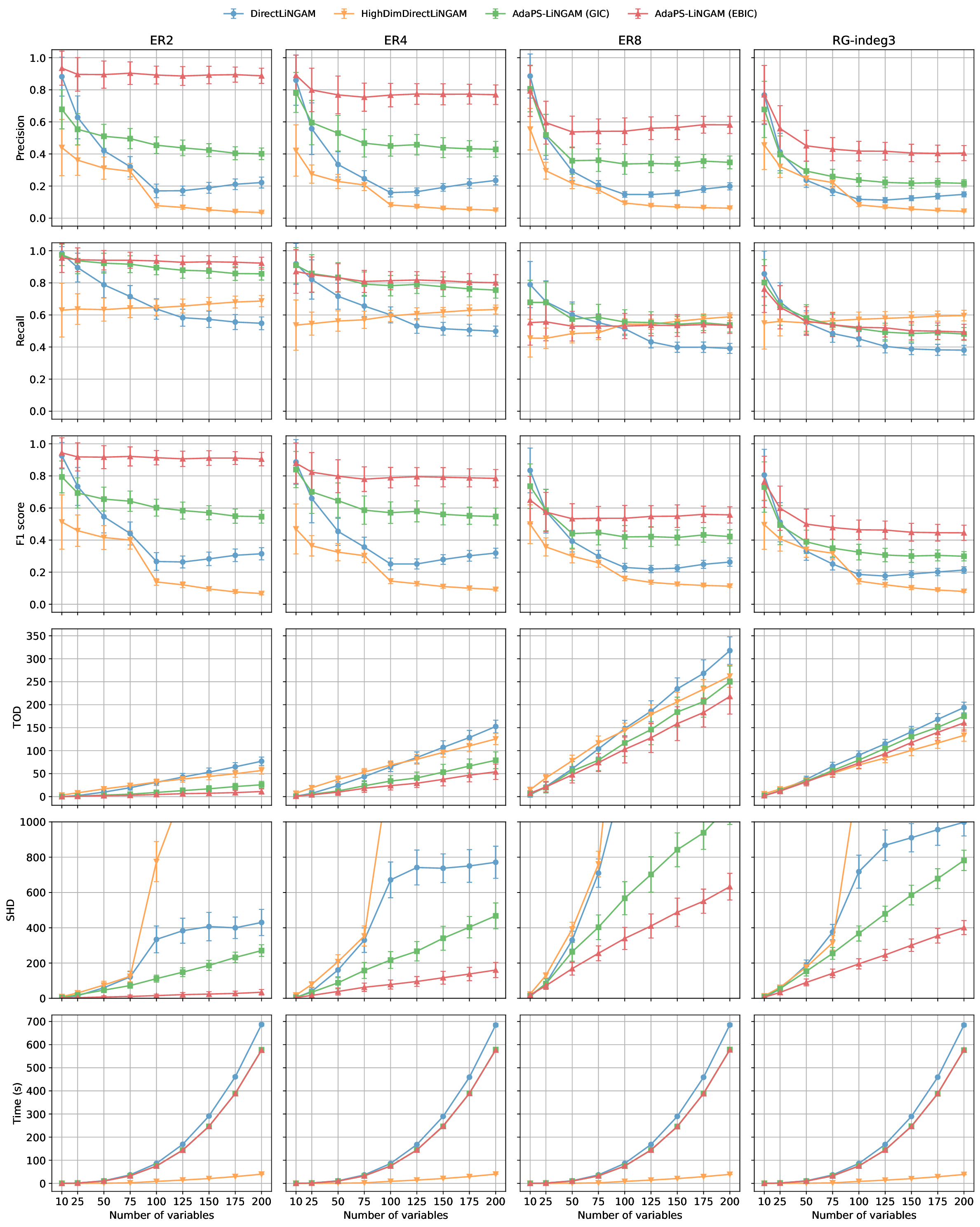}
  \caption{Results for the small-sample experiment with $n=100$ under uniform noise, with $p\in\{10,25,50,75,100,125,150,175,200\}$. Markers show the mean over $100$ trials and error bars $\pm1$ standard deviation. The SHD axis is truncated at $1000$.}
  \label{fig:ex1-uniform}
\end{figure}

\begin{figure}[!htbp]
  \centering
  \includegraphics[width=\textwidth]{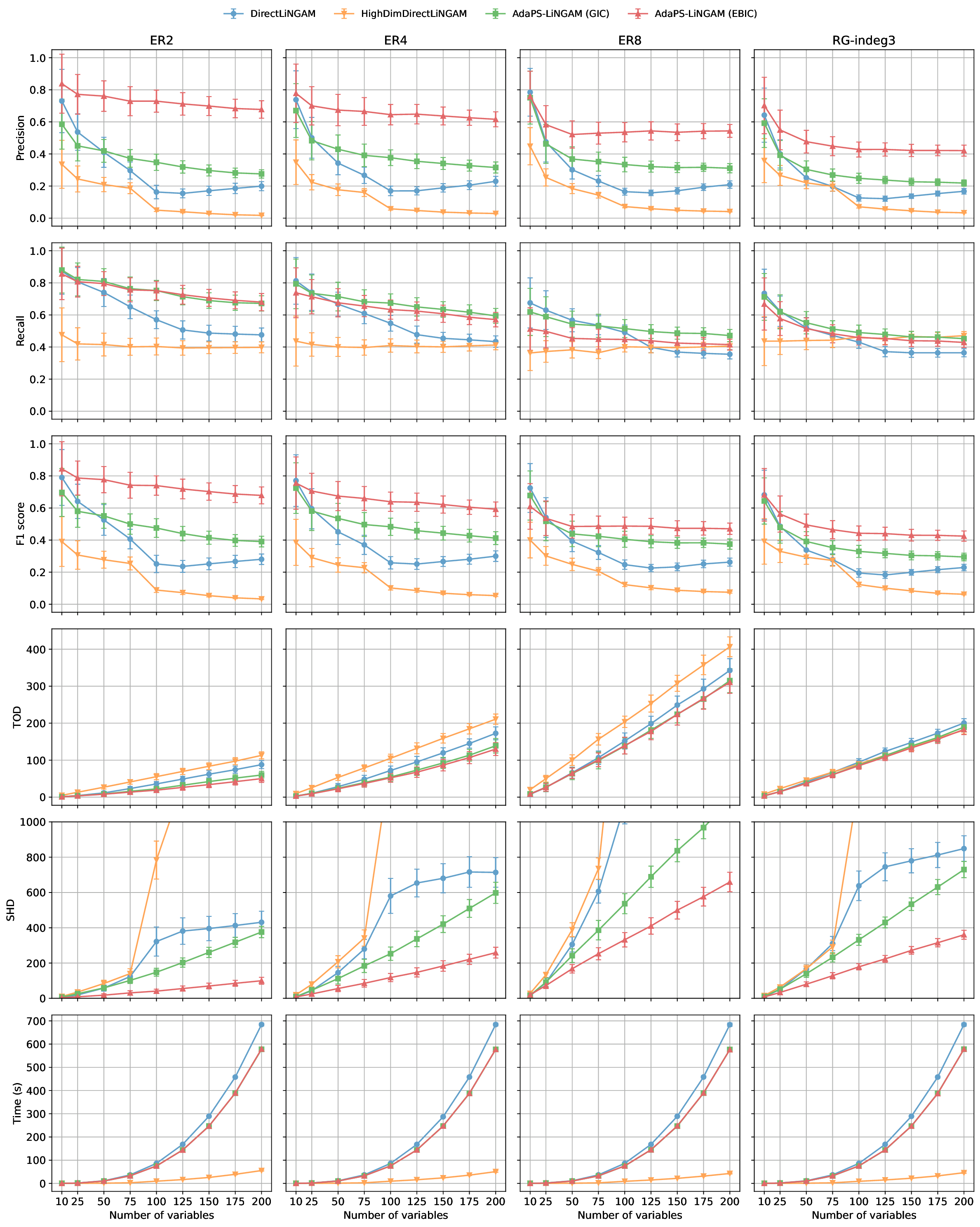}
  \caption{Results for the small-sample experiment with $n=100$ under Laplace noise, with $p\in\{10,25,50,75,100,125,150,175,200\}$. Markers show the mean over $100$ trials and error bars $\pm1$ standard deviation. The SHD axis is truncated at $1000$.}
  \label{fig:ex1-laplace}
\end{figure}

\subsection{Performance under Reduced Sample Sizes}\label{subsec:scalability}

Here we fix $p=100$ and vary the sample size from $n=100$ to $2000$.
Rather than characterizing a formal sample-complexity threshold, this experiment asks how rapidly each method loses accuracy as observations become scarce. 
The four graph families are again evaluated under both uniform and Laplace noise, with the results shown in Figures~\ref{fig:ex2-uniform} and~\ref{fig:ex2-laplace}.

AdaPS-LiNGAM (EBIC) shows a markedly slower deterioration than DirectLiNGAM as the sample size is reduced. 
Under uniform noise on ER2, for example, its F1 score changes only from $0.997$ at $n=2000$ to $0.914$ at $n=100$, whereas DirectLiNGAM falls from $0.978$ to $0.267$. 
In terms of the absolute change, this corresponds to a decrease of only $0.083$ for AdaPS-LiNGAM (EBIC), compared with $0.711$ for DirectLiNGAM. 
The same pattern persists as the graphs become denser or structurally different. 
On ER8, the corresponding F1 scores are $0.982$ and $0.536$ for AdaPS-LiNGAM (EBIC), versus $0.993$ and $0.229$ for DirectLiNGAM at $n=2000$ and $n=100$, respectively.
For \texttt{RG-indeg3}, the decrease is from $0.983$ to $0.464$ for AdaPS-LiNGAM (EBIC), compared with a drop from $0.990$ to $0.186$ for DirectLiNGAM.

Graph error measures show the same difference in sensitivity to sample reduction. 
On uniform-noise ER2, SHD rises from $0.67$ to $15.49$ for AdaPS-LiNGAM (EBIC) as $n$ falls from $2000$ to $100$, whereas DirectLiNGAM rises from $4.49$ to $334.05$. 
The contrast is even more pronounced for ER8: AdaPS-LiNGAM (EBIC) changes from $15.02$ to $339.57$, while DirectLiNGAM increases from $5.69$ to $1281.72$.

The behavior of the two information criteria remains consistent with the first experiment. 
GIC tends to preserve recall by accepting more edges, while EBIC places greater emphasis on controlling false positives. 
On uniform-noise ER2, for instance, GIC reaches a recall of essentially one once $n\ge500$, but its SHD remains $24.42$ even at $n=2000$, compared with $0.67$ for the EBIC variant. 
Thus, the difference between the two criteria becomes primarily a question of sparsity rather than the ability to identify existing edges.

HighDimDirectLiNGAM shows a distinct pattern as the sample size changes. 
Its relative advantage is more apparent in causal-order recovery when $n$ is close to $p$, although this does not translate into comparable edge-recovery accuracy. 
This is particularly clear for \texttt{RG-indeg3}: at $n=100$ ($n/p=1$), HighDimDirectLiNGAM achieves the lowest TOD, $69.01$, compared with $73.13$ for AdaPS-LiNGAM (EBIC) and $90.22$ for DirectLiNGAM. 
In contrast, its F1 score at the same sample size is only $0.144$, with an SHD of $1289.40$. 
This separation between ordering and edge recovery is consistent with the analysis of \citet{wang2020high}, where conservative pruning can preserve a valid ordering while leaving non-parental ancestors in the estimated edge set.

As the sample size increases, the ordering advantage does not lead to a corresponding improvement in edge recovery. 
For \texttt{RG-indeg3}, TOD decreases from $69.01$ at $n=100$ to $55.34$ at $n=2000$, whereas F1 score increases only from $0.144$ to $0.343$. 
The effect is even weaker for ER8, where F1 score changes from $0.160$ to only $0.211$ over the same range. 
Thus, within the sample sizes considered here, additional observations improve the ordering only modestly and do not remove the large gap in edge-recovery accuracy.

The computational cost of AdaPS-LiNGAM increases substantially with the number of observations. 
On uniform-noise ER2, its average runtime rises from approximately $75$ seconds at $n=100$ to $792$ seconds at $n=2000$, whereas DirectLiNGAM increases from approximately $87$ to $129$ seconds. 
This reflects the increasing cost of adaptive BSS, including both the regression fits and the range of candidate support sizes considered by \texttt{abess}. 
Consequently, the statistical benefit of AdaPS-LiNGAM is most pronounced when observations are limited, while the computational advantage shifts toward DirectLiNGAM as the sample size becomes large.

\begin{figure}[!htbp]
  \centering
  \includegraphics[width=\textwidth]{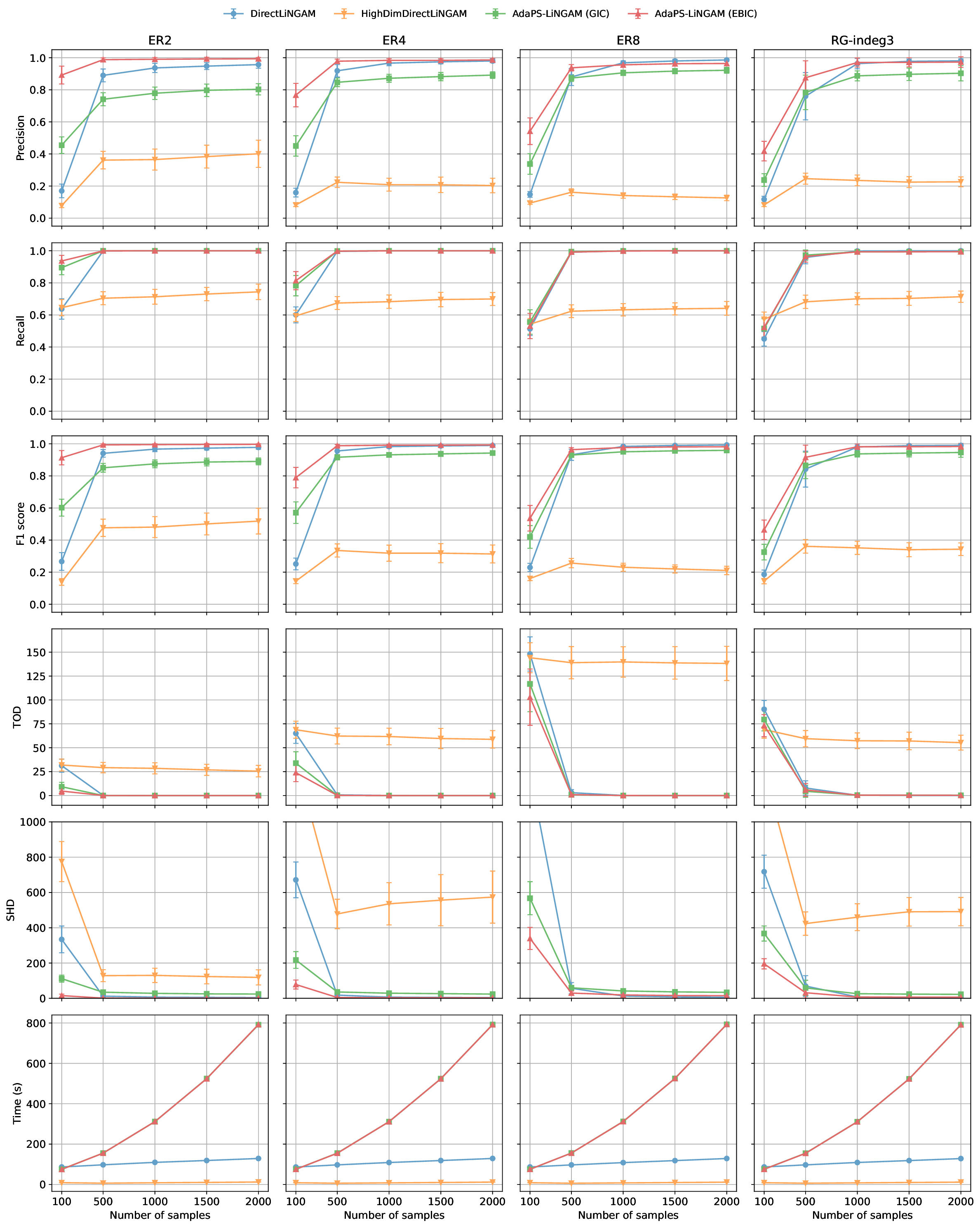}
  \caption{Results for the reduced-sample experiment with $p=100$ under uniform noise, with $n\in\{100,500,1000,1500,2000\}$. Markers show the mean over $100$ trials and error bars $\pm1$ standard deviation. The SHD axis is truncated at $1000$.}
  \label{fig:ex2-uniform}
\end{figure}

\begin{figure}[!htbp]
  \centering
  \includegraphics[width=\textwidth]{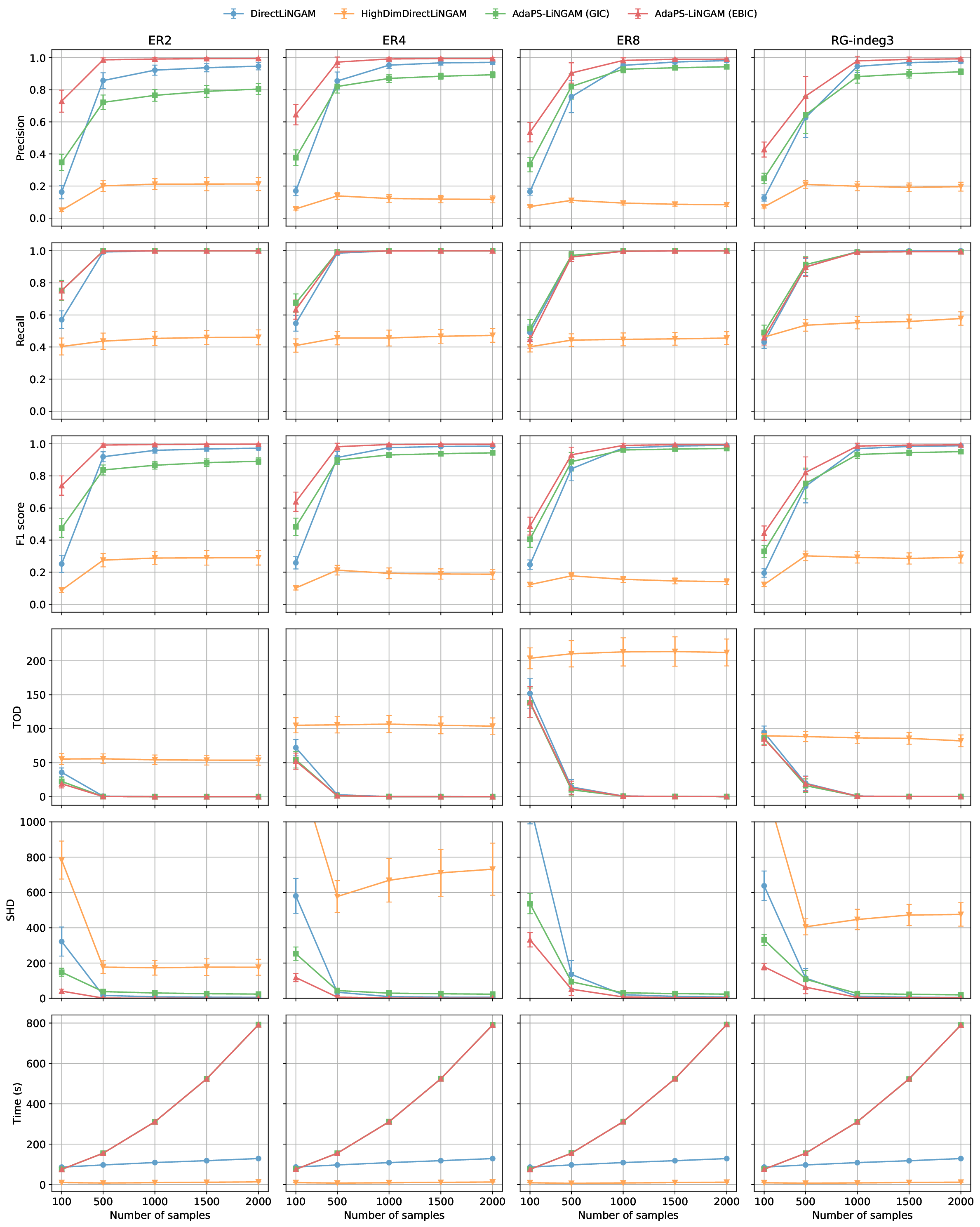}
  \caption{Results for the reduced-sample experiment with $p=100$ under Laplace noise, with $n\in\{100,500,1000,1500,2000\}$. Markers show the mean over $100$ trials and error bars $\pm1$ standard deviation. The SHD axis is truncated at $1000$.}
  \label{fig:ex2-laplace}
\end{figure}

In summary, we have confirmed that AdaPS-LiNGAM remains effective as the number of observations becomes limited relative to the number of variables. 
In particular, the EBIC-based variant generally achieved more accurate causal-order and edge recovery than DirectLiNGAM and showed a more gradual deterioration as the sample size decreased. 
The GIC-based variant tended to retain high recall but selected denser graphs, resulting in more false positives than the EBIC-based variant. 
HighDimDirectLiNGAM was substantially faster and showed an advantage in causal-order recovery in some settings. 
Its edge-recovery results should be interpreted in light of the assumed maximum in-degree $J=3$, which matches \texttt{RG-indeg3} but can be smaller than the true maximum in-degree of the ER graphs. 
Even in \texttt{RG-indeg3}, however, the ordering advantage did not translate into comparable edge-recovery accuracy. 
Overall, these results support the effectiveness of adaptive sparse residual reconstruction in sample-limited settings across different graph structures and noise distributions.

\section{Conclusion and Future Work}\label{sec:conclusion}

In this paper, we proposed AdaPS-LiNGAM, which extends the DirectLiNGAM framework by reconstructing residuals from a subset of previously selected variables rather than through cumulative residualization. 
We focused on sample-limited settings in which the number of observations is constrained and the number of variables is comparable to or larger than the sample size. 
We characterized the active boundary as the unique minimal predictor set required for exact residual reconstruction under a localized non-cancellation condition and showed that residual reconstruction using this set yields the same residuals as cumulative DirectLiNGAM residualization and therefore preserves the next root-selection step. 
We further showed that, under a local sample nondegeneracy condition, sufficiently sparse active boundaries prevent the oracle residuals from becoming degenerate even when the number of variables exceeds the sample size. 
Empirically, AdaPS-LiNGAM, particularly with EBIC-based support selection, achieved more accurate causal-order and graph recovery than DirectLiNGAM in these sample-limited settings and degraded more gradually as the sample size was reduced.

Several directions remain for future work. 
First, finite-sample statistical guarantees for AdaPS-LiNGAM should be established, including conditions under which the active boundary is reliably recovered and such recovery leads to consistent causal-order and graph estimation. 
Second, the small-sample behavior of the root-selection criterion inherited from DirectLiNGAM should be analyzed separately to clarify its reliability under limited observations. 
Third, the current characterization of active-boundary sparsity and the sufficient condition relating its size to the sample size could be developed into more practical and readily verifiable assumptions. 
More generally, it would be valuable to obtain explicit guarantees on how small the sample size can become while still permitting reliable causal discovery with AdaPS-LiNGAM.


\backmatter












\section*{Statements and Declarations}
\subsection*{Conflict of Interest}
On behalf of all authors, the corresponding author states that there is no conflict of interest.






\begin{appendices}

\section{Additional Analysis of Active-Boundary Sparsity}
\label{app:active-boundary}

\subsection{RG-indeg3 Graphs}
\label{subsec:app-rg-indeg3}

The \texttt{RG-indeg3} graphs used in our experiments are generated according to the Wang--Drton construction with in-degree bound $J=3$ and fixed causal order $\pi(i)=i$ \citep{wang2020high}. 
Thus, $1,\ldots,p$ is a true causal order of each generated DAG. As shown in Example~\ref{ex:wang-drton-boundary}, this construction can have active-boundary sparsity that grows linearly with $p$, despite its bounded maximum in-degree.

Figure~\ref{fig:app-rg-indeg3-boundary} compares the empirical mean of $s_{\max}(\pi)$ with the theoretical lower bound on $\mathbb{E}[s_{\max}(\pi)]$,
\[
1+\frac{\lfloor (p-1)^2/4\rfloor}{p-2}.
\]
The empirical means follow the same approximately linear trend and remain above the theoretical lower bound over the range considered.

\begin{figure}[htbp]
  \centering
  \includegraphics[width=0.75\textwidth]{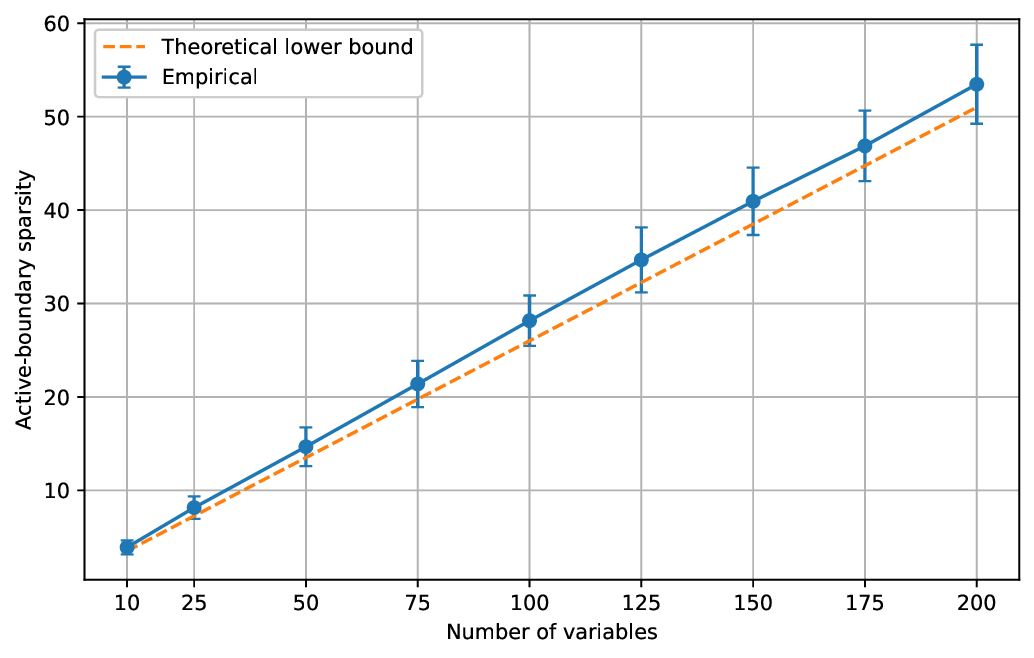}
  \caption{Empirical mean active-boundary sparsity of the \texttt{RG-indeg3} graphs under the fixed true causal order $\pi(i)=i$, together with the theoretical lower bound on $\mathbb{E}[s_{\max}(\pi)]$ from Example~\ref{ex:wang-drton-boundary}. Error bars indicate one standard deviation across graph realizations.}
  \label{fig:app-rg-indeg3-boundary}
\end{figure}

\subsection{Comparison Across Graph Families}
\label{subsec:app-boundary-comparison}

Next, we compared the active-boundary sparsity across the four graph families used in Section~\ref{sec:experiments}. 
Because $s_{\max}(\pi)$ depends on the causal order, we further examined its variation across valid orders. 
For each true graph realization, we sampled ten valid causal orders using randomized topological sorting. 
For RG-indeg3, the mandatory chain makes the causal order unique, so all sampled orders coincide with $\pi(i)=i$.
Figure~\ref{fig:app-boundary-comparison} shows the median across graph realizations and sampled orders, with the shaded regions indicating the interquartile range.

Active-boundary sparsity increases with graph density: ER2 has the smallest values, followed by ER4 and ER8. 
The \texttt{RG-indeg3} graphs also show a substantial increase with $p$, despite their maximum in-degree being bounded by three. 
Thus, maximum in-degree alone does not characterize the size of the active boundary.

\begin{figure}[htbp]
  \centering
  \includegraphics[width=0.8\textwidth]{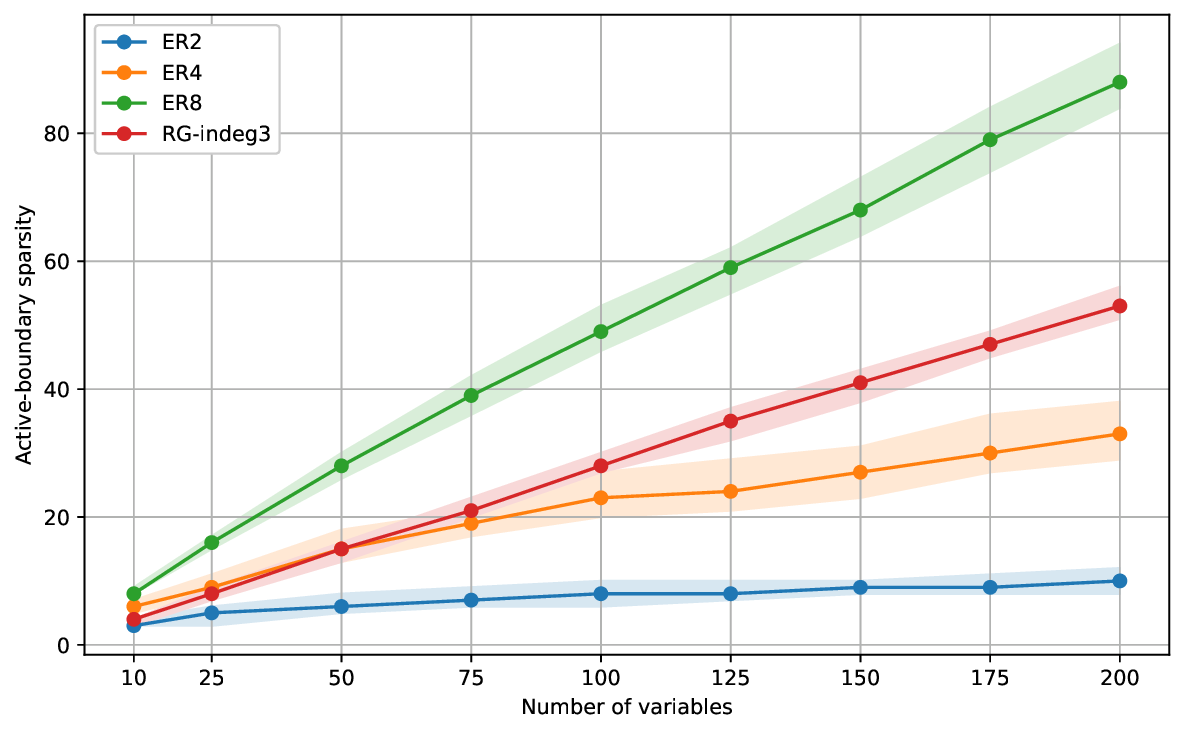}
  \caption{Active-boundary sparsity across the four graph families used in
  Section~\ref{sec:experiments}. For each true graph realization, ten valid causal orders
  were sampled. Curves show the median across graph realizations and sampled
  orders, and shaded regions indicate the interquartile range.}
  \label{fig:app-boundary-comparison}
\end{figure}

\subsection{Relation to Experimental Results}
\label{subsec:app-boundary-results}

The active-boundary results provide a structural interpretation of the performance differences observed in Section~\ref{subsec:small-sample}. 
In particular, \texttt{RG-indeg3} has a substantially larger active-boundary burden than would be suggested by its maximum in-degree alone, which is consistent with its relatively difficult edge-recovery performance at large $p$.

More generally, the ordering of active-boundary sparsity across graph families is qualitatively consistent with the experimental results: the denser ER8 graphs have larger active-boundary sparsity than ER2 and also exhibit greater degradation in edge recovery. 
These comparisons suggest that active-boundary sparsity captures a component of residual reconstruction difficulty that is not reflected by maximum in-degree alone.

These results are intended as a structural interpretation of the empirical findings, not as a finite-sample guarantee for AdaPS-LiNGAM. 
The theoretical result in Section~\ref{subsec:boundary-sparsity} guarantees nonvanishing oracle residuals under $s_{\max}(\pi)\le n-2$ and the stated sample nondegeneracy condition, but does not guarantee finite-sample support recovery or correct causal-order recovery.
\end{appendices}


\bibliography{sn-bibliography}

\end{document}